\newcommand{\kwnospace}[1]{{\ensuremath {\mathsf{#1}}}}
\newcommand{\kw}[1]{{\ensuremath {\mathsf{#1}}}\xspace}
\newcommand{\code}[1]{\texttt{#1}\xspace}
\newcommand{\flash}{\kw{FLASH}}
\newcommand{\Local}{\kw{Local}}
\newcommand{\Filter}{\kw{Filter}}
\newcommand{\Push}{\kw{Push}}
\newcommand{\Pull}{\kw{Pull}}
\newcommand{\Group}{\kw{Group}}
\newcommand{\Order}{\kw{Order}}
\newcommand{\FlashInput}{\kw{Input}}
\newcommand{\FlashOutput}{\kw{Output}}
\newcommand{\Fin}{\kw{Fin}}
\newcommand{\FlashSwitch}{\kw{Switch}}
\newcommand{\LoopS}{\kw{LoopS}}
\newcommand{\LoopE}{\kw{LoopE}}
\newcommand{\FlashIf}{\textbf{if}}
\newcommand{\FlashElse}{\textbf{else}}
\newcommand{\FlashDo}{\textbf{do}}
\newcommand{\FlashWhile}{\textbf{while}}
\newcommand{\vset}{\kw{VertexSet}}
\newcommand{\vsets}{\kwnospace{VertexSet}s\xspace}
\newcommand{\vIn}{V^{\kwnospace{in}}}
\newcommand{\vOut}{V^{\kwnospace{out}}}
\newcommand{\NULL}{\aleph}
\newcommand{\placeholder}{\_}
\newcommand{\harpoon}{\overrightarrow}
\newcommand{\reffig}[1]{Figure~\ref{fig:#1}}
\newcommand{\refsec}[1]{Section~\ref{sec:#1}}
\newcommand{\reftable}[1]{Table~\ref{tab:#1}}
\newcommand{\refex}[1]{Example~\ref{ex:#1}}
\newcommand{\refprop}[1]{Proposition~\ref{prop:#1}}
\newcommand{\reflist}[1]{Listing~\ref{list:#1}}
\long\def\comment#1{}
\newcommand{\mathleft}{\@fleqntrue\@mathmargin0pt}
\newcommand{\mathcenter}{\@fleqnfalse}
\newcommand{\stitle}[1]{\vspace{1ex} \noindent{{\bf #1}}}
\newcommand{\sstitle}[1]{\vspace{1ex} \noindent{\textit{ #1}}}
\newcommand{\Smiley}[1]{%
\begin{tikzpicture}[scale=0.15]
    \newcommand*{\SmileyRadius}{1.0}%
    \draw [fill=brown!10] (0,0) circle (\SmileyRadius)
        ;  

    \pgfmathsetmacro{\eyeX}{0.5*\SmileyRadius*cos(30)}
    \pgfmathsetmacro{\eyeY}{0.5*\SmileyRadius*sin(30)}
    \draw [fill=black,draw=none] (\eyeX,\eyeY) circle (0.15cm);
    \draw [fill=black,draw=none] (-\eyeX,\eyeY) circle (0.15cm);

    \pgfmathsetmacro{\xScale}{2*\eyeX/180}
    \pgfmathsetmacro{\yScale}{1.0*\eyeY}
    \draw[color=black, domain=-\eyeX:\eyeX]   
        plot ({\x},{
            -0.1+#1*0.15 
            -#1*1.75*\yScale*(sin((\x+\eyeX)/\xScale))-\eyeY});
\end{tikzpicture}%
}%
\definecolor{codegreen}{rgb}{0,0.4,0}
\definecolor{codegray}{rgb}{0.5,0.5,0.5}
\definecolor{codepurple}{rgb}{0.58,0,0.82}
\definecolor{backcolour}{rgb}{0.95,0.95,0.92}
\lstdefinestyle{mystyle}{
    commentstyle=\color{codegreen},
    basicstyle=\ttfamily\footnotesize,
    breakatwhitespace=false,         
    breaklines=true,                 
    captionpos=t,                    
    keepspaces=true,                 
    numbers=left,                    
    numbersep=5pt,                  
    showspaces=false,                
    showstringspaces=false,
    showtabs=false,                  
    tabsize=2,
    mathescape = true,
    xleftmargin = 2em,
    framexleftmargin=1.5em,
    language=C++,
    morekeywords = [2]{Filter, Local, Push, Pull, Output, Group, Order, HopOut, in, procedural, as, ID},
}
  \providecommand\BibTeX{{%
    \normalfont B\kern-0.5em{\scshape i\kern-0.25em b}\kern-0.8em\TeX}}}
\newtheorem{example}{Example}[section]
\newtheorem{theorem}{Theorem}[section]
\newtheorem{remark}{Remark}[section]
\newtheorem{proposition}{Proposition}[section]
\author{
\alignauthor
  Lu~Qin \\
  Centre for Artificial Intelligence \\
  University of Technology, Sydney \\
  NSW, 2007 \\
  \texttt{lu.qin@uts.edu.au} \\
  \And
  Longbin~Lai \\
  Alibaba Group \\
  Hangzhou China \\
  \texttt{longbin.lailb@alibaba-inc.com} \\
  \And
  Kongzhang~Hao \\
  School of computer science and engineering \\
  UNSW, Sydney \\
  NSW, 2052 \\
  \texttt{khao@cse.unsw.edu.au} \\
  \And
  Zhongxin~Zhou \\
  School of computer science and software engineering \\
  East China Normal University\\
  Shanghai, China \\
  \texttt{zxzhou@stu.ecnu.edu.cn} \\
  \And
  Yiwei~Zhao \\
  School of computer science and software engineering \\
  East China Normal University \\
  Shanghai, China \\
  \texttt{ywzhao@stu.ecnu.edu.cn} \\
  \And
  Yuxing~Han \\
  Alibaba Group \\
  Shanghai, China \\
  \texttt{yuxing.hyx@alibaba-inc.com} \\
  \And
  Xuemin~Lin \\
  School of computer science and engineering \\
  UNSW, Sydney \\
  NSW, 2052 \\
  \texttt{lxue@cse.unsw.edu.au} \\
  \And
  Zhengping~Qian \\
  Alibaba Group \\
  Hangzhou, China \\
  \texttt{zhengping.qzp@alibaba-inc.com} \\
    \And
  Jingren~Zhou \\
  Alibaba Group \\
  Hangzhou, China \\
  \texttt{jingren.zhou@alibaba-inc.com}
}
\author{
Lu~Qin$^{\ddag}$, Longbin~Lai$^{\flat\S}$, Kongzhang~Hao$^{\S}$, Zhongxin~Zhou$^{\natural}$, Yiwei~Zhao$^{\natural}$, \\ \textbf{Yuxing~Han$^{\flat}$, Xuemin~Lin$^{\S\natural}$, Zhengping~Qian$^{\flat}$, Jingren~Zhou$^{\flat}$} \vspace{2mm} \\
\affil{$^{\ddag}$Centre for Artificial Intelligence, University of Technology, Sydney, Australia} \\
\affil{$^{\flat}$ Alibaba Group, China}  \\
\affil{$^{\S}$ The University of New South Wales, Sydney, Australia}\\
\affil{$^{\natural}$East China Normal University, China } \vspace{2mm} \\
\fontsize{10}{10}\selectfont\ttfamily\upshape
$^{\ddag}$lu.qin@uts.edu.au; \\
\fontsize{10}{10}\selectfont\ttfamily\upshape
$^{\flat}$\{longbin.lailb, yuxing.hyx, zhengping.qzp, jingren.zhou\}@alibaba-inc.com; \\
\fontsize{10}{10}\selectfont\ttfamily\upshape
$^{\S}$\{khao, lxue\}@cse.unsw.edu.au;
$^{\natural}$\{zxzhou, ywzhao\}@stu.ecnu.edu.cn
}
\author[$\ddag$]{Lu Qin}
\author[$\flat\S$]{Longbin Lai}
\author[$\S$]{Kongzhang Hao}
\author[$\natural$]{Zhongxin Zhou}
\author[$\natural$]{Yiwei Zhao}
\author[$\flat$]{Yuxing Han}
\author[$\S\natural$]{Xuemin Lin}
\author[$\flat$]{Zhengping Qian}
\author[$\flat$]{Jingren Zhou}
\affil[$\ddag$]{Centre for Artificial Intelligence, University of Technology, Sydney, Australia}
\affil[$\flat$]{Alibaba Group, China}
\affil[$\S$]{The University of New South Wales, Sydney, Australia}
\affil[$\natural$]{East China Normal University, China}
\begin{document}
\title{Taming the Expressiveness and Programmability of Graph Analytical Queries}

\comment{
\author{Ben Trovato}
\authornote{Both authors contributed equally to this research.}
\email{trovato@corporation.com}
\orcid{1234-5678-9012}
\author{G.K.M. Tobin}
\authornotemark[1]
\email{webmaster@marysville-ohio.com}
\affiliation{%
  \institution{Institute for Clarity in Documentation}
  \streetaddress{P.O. Box 1212}
  \city{Dublin}
  \state{Ohio}
  \postcode{43017-6221}
}
}


\begin{abstract}
Graph database has enjoyed a boom in the last decade, and graph queries accordingly gain a lot of attentions from both the academia and industry. We focus on analytical queries in this paper. While analyzing existing domain-specific languages (DSLs) for analytical queries regarding the perspectives of completeness, expressiveness and programmability, we find out that none of existing work has achieved a satisfactory coverage of these perspectives. Motivated by this, we propose the \flash DSL, which is named after the three primitive operators \underline{F}ilter, \underline{L}oc\underline{A}l and Pu\underline{SH}. We prove that \flash is Turing complete (completeness), and show that it achieves both good expressiveness and programmability for analytical queries. 
We provide an implementation of \flash based on code generation, and compare it with native C++ codes and existing DSL using representative queries. The experiment results demonstrate \flash's expressiveness, and its capability of programming complex algorithms that achieve satisfactory runtime.
\end{abstract}


\keywords{Graph Queries, Domain-Specific Language, Graph Database}


\maketitle

\section{Introduction}
\label{sec:intro}
Last decade has witnessed the proliferation of graph database, in which the entities are modelled as vertices and the relationships among them are modelled as edges. Graph database emerges with growing needs of expressing and analyzing the inter-connections of entities. Examples of graph databases are social network where people are vertices and their friendships form edges, web graphs where web pages are vertices and the hyperlinks serve as edges, protein-protein-interaction networks where proteins are vertices and their interactions become edges, to just name a few.

Correspondingly, graph queries have gained many attentions as the growing of graph database. In \cite{Angles2017}, the authors have surveyed two categories of graph queries, namely \emph{pattern matching queries} and \emph{navigational queries}. These two kinds of queries have arrived at a mature stage, endorsed by a lot of theoretical efforts \cite{Barcelo2013, Wood2012} and industrial products including SPARQL \cite{SparkQL}, Cypher \cite{Francis2018}, PGQL \cite{vanRest2016}, GCore \cite{Angles2018} and Gremlin \cite{Rodriguez2015}. However, there exists a third category that has been recognized but not yet fully studied \cite{Angles2017}, namely \emph{analytical queries}, and it is the main focus of this work. Analytical queries are related to conducting machine learning/data mining (MLDM) tasks over graphs. Examples of analytical queries are connected components (CC), single-source shortest path (SSSP), PageRank (PR), Core Decomposition (CD), Triangle Listing (TL) and Graph Coloring (GC). 

\stitle{State-of-the-Arts.} It is yet unclear how to abstract the core features of the analytical queries \cite{Angles2017}. A typical solution is via the definition of graph computation models, and graph engines provide users the programming interfaces to implement the query. Examples of such practice include Pregel's vertex-centric model \cite{Malewicz2010}, PowerGraph's GAS model \cite{Gonzalez2012}, and Gremlin's graph computation step \cite{Rodriguez2015}. However, this requires mastering a lot of domain knowledge, which is often challenging for non-expert users due to the complication of graph analysis. To address this issue, there emerge two directions of studies in the development of domain-specific languages (DSLs) for graph analytical queries. The first direction follows the high-level programming language in general, while encapsulating some graph-specific operations such as BFS/DFS to ease the programming. The DSL is then translated back to the high-level programming language using the techniques of code generation. Green-Marl \cite{Hong2012} is a representative work of this direction. The other direction attempts to extend declarative database query language  in order to leverage the foundation of their user bases. TigerGraph's GSQL \cite{Deutsch2019} is such a work that extends SQL for graph analysis.


\stitle{Important Perspectives.} In this paper, we consider three perspectives to evaluate a DSL for graph analysis, namely \emph{Completeness}, \emph{Expressiveness} and \emph{Programmability}. We explain these perspectives and argue that all of them are important for a DSL in the following. 
\begin{itemize}
    \item \emph{Completeness}: More specifically, Turing completeness. A language (computing system) is \emph{Turing complete} if it can simulate a single-head Turing machine, and if it can do so, it can be programmed to solve any computable problem  \cite{turing1936}. In this sense, completeness is seemingly the most significant feature of a DSL.
    \item \emph{Expressiveness}: Expressiveness refers to the succinctness of the language while expressing a query, which can be measured using the \emph{Logical Line of Codes} (LLoCs) \cite{Nguyen2007}. Expressiveness is important for non-expert users to quickly get into graph queries. Another reason to weigh on expressiveness is that system-level optimization can often be leveraged to translate the codes into relatively efficient runtime.
    \item \emph{Programmability}: Programmability stands for the capability of programming certain query in a different way typically for performance consideration. Programmability is important for expert users to tune the implementation of certain query. Programmability complements the system-level optimization by further offering query-specific tuning. 
\end{itemize}

\stitle{Motivations.} We now evaluate the state-of-the-art DSLs - Green-Marl \cite{Hong2012} and GSQL \cite{Deutsch2019}. Both languages demonstrate satisfying programmability. However, in terms of completeness, none of these languages for analytical queries has proven completeness guarantee to the best we know. In terms of expressiveness, Green-Marl in principle follows the C-style programming, which may not be intuitive for non-expert user. In addition, the authors in \cite{Jindal2014} also concern that writing such codes for every graph analysis could quickly become very messy for end users. GSQL is a SQL-like language, while borrowing the SQL syntax is a double-blade sword. On the one hand, it facilitates the programming for users who are already familiar with SQL. On the other hand, it must follow the SQL programming structure, which adds a few extra expressions. For example, the official GSQL codes \footnote{https://github.com/tigergraph/gsql-graph-algorithms/blob/master/algorithms/examples/Community/conn\_comp.gsql} for CC occupy 26 LLoCs, among which only around 12 lines are effective codes. In comparison, Green-Marl's CC codes \cite{flash-code} also takes 26 LLoCs, while a majority of these codes are effective.

In summary, there lacks a DSL for graph analytical queries that achieve a satisfactory coverage of the perspectives of completeness, expressiveness and programmability.


\begin{lstlisting}[frame=tb, label=list:bfs_example, caption = A CC example using \flash]
ID @cc;
ID @precc;
int @cnt;
V.Local($\placeholder$.@cc = $\placeholder$.id);
while (A.size() > 0) {
    A = A.Push($\placeholder$.@out(|v| v.@precc < min(_.@cc))
         .Filter($\placeholder$.@precc < $\placeholder$.@cc)
         .Local($\placeholder$.@cc = $\placeholder$.@precc);
}
V.Group(@cc, |v| v.@cnt = sum(1));
\end{lstlisting}

\stitle{Our Contributions.} In response to the gap, we propose the \flash DSL (named after the three primitive operators \underline{F}ilter, \underline{L}oc\underline{A}l and Pu\underline{SH}) in this paper. We formally prove that \flash is Turing complete (\refsec{flash_machine}). We argue that \flash achieves the best-so-far performance for both expressiveness and programmability in the following. Regarding expressiveness, \flash can express many widely-used analytical queries much more succinct than Green-Marl and GSQL (\reftable{analytical_queries}). As an example, the \flash codes for connected components in \reflist{bfs_example} use only 10 LLoCs, compared to 26 LLoCs for both Green-Marl and GSQL. Regarding programmability, we show that \flash's programmability is mainly a result of its flexible operator chaining, and the semantic of implicit edges that allows each vertex to communicate with all vertices in the graph.
As an evidence, we manage to implement the optimized connected components (CC-opt) algorithm proposed in \cite{Shiloach1982, Qin2014} using \flash, and its correctness is carefully verified. We have further programmed \flash to express over 50 advanced graph algorithms \cite{flash-code} in the literature, including iteration-optimized minimum spanning tree \cite{Qin2014}, triangle listing \cite{Danisch2018}, $k$-core decomposition \cite{Khaouid2015}, butterfly counting \cite{Wang2019}, graph coloring \cite{Yuan2017}, to just name a few.

We design \flash following a BFS-style to favor a parallel/distributed implementation for large graph processing. \reflist{bfs_example} has revealed a taste of functional programming of \flash. We tend to use the functional programming design as it can be more naturally incorporated into the distributed dataflow architectures (e.g. \cite{Abadi2016, Akidau2015, Kat2016, Murray2013, Zaharia2010}). 




\comment{
\begin{table}
	\small
	\centering
	\caption{Comparisons of languages for graph's analytical queries.}
	\label{tab:comparisons}
	\begin{tabular}{|c|c|c|c|} \hline
		\textbf{Property}&\textbf{Completeness}&\textbf{Expressiveness}&\textbf{Programmability}\\ \hline
		Green-Marl& $\Smiley{-1}$ & $\Smiley{0}$ & $\Smiley{1}$ \\ \hline
		GSQL& $\Smiley{-1}$ & $\Smiley{0}$ & $\Smiley{-1}$ \\ \hline
		\flash & $\Smiley{1}$ & $\Smiley{1}$ & $\Smiley{1}$ \\ \hline
	\end{tabular}
\end{table}
}

 In this paper, we mainly focus on the semantics of \flash, and leave the details of its syntax and distributed implementation to the future work (\refsec{concl}). 
 In summary, we make the following contributions:
\begin{enumerate}
    \item We propose the \flash query language for analytical queries over graph data. We define the \flash operators, its control flow, which forms the \flash machine. We prove that the \flash machine is Turing complete.
    \item We simulate the GAS model (a symbolic graph programming model for analytical queries) using \flash, which immediately indicates that \flash can express many widely used analytical queries. We exemplify \flash's code on some representative analytical queries to show its expressiveness.
    \item We demonstrate \flash's strong programmability by formally validating its implementation of the optimized connected components algorithm \cite{Qin2014}. 
    \item We implement \flash based on code generation that supports parallelism via OpenMP. We conduct extensive experiments to demonstrate the huge potentials of \flash while compared to existing works and native C++ codes.
\end{enumerate}

\section{Preliminary}
\label{sec:preliminary}
\comment{
\stitle{Data Types.} We introduce necessary data types needed in this paper:
\begin{itemize}
    \item The primitive types: integer $\mathbb{Z}$, float-point $\mathbb{F}$, finite strings $\Sigma^*$ over the accepted alphabets $\Sigma$, the boolean types $\{\top, \bot\}$ ($\top$ for true, $\bot$ for false) and $\NULL$ for none value.
    \item Tuple type: A $k$-tuple, denoted as $\mathbb{T} = (x_1, x_2, \ldots, x_k)$ is a composition of $k$ elements of arbitrary types.
    \item List type: A list, denoted as $\mathbb{L} = \{x_1, x_2, \ldots\}$ is a collection of arbitrary number of elements of the same types.
    \item Set type: A set, denoted as $\mathbb{S} = \{x_1, x_2, \ldots\}$, is analogous to a List type, while disallowing duplicate elements.
    \item Map type: A map, denoted as $\mathbb{M} = \{(k_1, v_1), (k_2, v_2), \ldots\}$, is a collection of arbitrary number of key-value entries. The keys must not duplicate. 
\end{itemize}
}

\stitle{Runtime Property Graph Model.} \flash is defined over the \emph{runtime property graph model} - an extension of the property graph model in \cite{Robinson2013} - denoted as $G = (V, E, \phi, \zeta, \kappa)$, where
\begin{itemize}
    \item $V$ and $E$ denote the \emph{ordered} sets of vertices and edges, respectively. 
    \item $\phi: E \mapsto V \times V$ is a \emph{total} function that maps each edge $e \in E$ to its source vertex $s$ and target vertex $t$, denoted as $\phi(e) = (s, t)$. We use $e[0]$ and $e[1]$ to denote the source and destination vertices of $e$. We support both directed edge and undirected edge. If $e = (s, t)$ is a directed edge, it means $e$ is from $s$ to $t$; otherwise, the source and destination vertices are relative concepts, which means that $s$ is the source and $t$ is the destination relative to $s$, and vise versa. We apply directed graph by default unless otherwise specified. Note that we allow multiple edges between each pair of vertices as \cite{Francis2018}. 
    \item $\zeta: (V \cup E) \mapsto L$ is a \emph{total} function that maps a vertex and an edge to a label, where $L$ denotes a finite set of labels.
    \item $\kappa: (V \cup E) \times \Sigma^+ \mapsto D$ is a \emph{partial} function that specifies properties for the vertices and edges, where the key of the property is a non-empty string and the value can be any accepted data type. We further divide the properties into \emph{static} properties and \emph{runtime} properties, and require that a runtime property must have the key start with the prefix ``@''. 
\end{itemize}

\begin{remark}
A static property is a property natively carried by the graph, which is read-only during the execution of a \flash program. The data type of a static property is also pre-claimed. One can thus interpret all static properties of the vertices and edges as the schema of the graph. A runtime property is the one that is created by a \flash program in runtime and will be invalidated after the execution. User is required to explicitly declare the runtime properties with their data types in \flash codes.  
\end{remark}

As the routine of this paper, we use a lower-case character to represent a value, a capital character to represent a collection and a capital with a hat of ``$\rightarrow$'' for a collection of collections. Given a vertex $v \in V$, we use $N^+_v$ (resp. $E^+_v$) and $N^-_v$ (resp. $E^-_v$) to denote its outgoing and incoming neighbors (resp. edges). Note that we have $N^+_v = N^-_v$ (resp. $E^+_v = E^-_v$) for undirected graph, and we often use $N^+_v$ and $E^+_v$ as a routine in this paper. Given a set of vertices $A \subseteq V$, we denote $\harpoon{N^+_A} = \{N^+_v \;|\; v \in A\}$, and $\harpoon{N^-_A}$, $\harpoon{E^+_A}$ and $\harpoon{E^-_A}$ are analogously defined. We can apply a filter function $f$ to each above neighboring notations as $*[f]$. For example, we write $N^+_v[f: N^+_v \mapsto \{\top, \bot\}] = \{v' | v' \in N^+_v \land f(v') = \top\}$.

\begin{table}[t]
\caption{Reserved vertex/edge properties in \flash.}
\label{tab:reserve_properties}
\small
\begin{tabularx}{\linewidth}{c|l|l}
\hline
\textbf{Object} & \textbf{Properties} & \textbf{Description} \\
\hline
$V/E$ & \code{id} & The unique identities of vertices/edges \\
\hline
$V/E$ & \code{label} & The label of vertices/edges \\
\hline
$V$ & \code{out}/\code{outE} & The outgoing adjacent vertices/edges \\
\hline
$V$ & \code{in}/\code{inE} & The incoming adjacent vertices/edges \\
\hline
$V$ & \code{both}/\code{bothE} & The whole adjacent vertices/edges \\
\hline
$E$ & \code{src}/\code{dst} & The source/destination of an edge \\
\hline
\end{tabularx}
\end{table}

\stitle{FLASH Programming.} In \flash codes, we use the popular dot notation-syntax of modern object-oriented programming languages. For example, $\kappa(v, age)$ is programmed as ``\code{v.age}''. \reftable{reserve_properties} lists the reserved properties for vertices/edges to be used in \flash codes throughout this paper.

\flash follows the syntax of functional programming, and we use the expression
\begin{align*}
    \code{|x, y, ...| ... }
\end{align*}
in \flash codes to denote a lambda function that takes \code{x}, \code{y}, etc. as input variables and returns the value from the last expression of the function body. When the context is clear and there is only one input variable in the lambda function, we can further omit the prefix ``\code{|x|}'', and the presence of ``\code{x}'' in the function body can be replaced with a placeholder ``\placeholder''. For example, we write ``\code{|v| v.age < 25}'' to denote a lambda function that accepts a vertex as input and returns a boolean value (such lambda function is often called a predicate). When the context is clear, we can write ``\code{$\placeholder$.age < 25}'' for short.

\section{The FLASH Machine and Query Language}
In this section, we first define the \flash operators and control flow. Then we define the \flash machine, followed by a proof of its Turing completeness.  

\subsection{The Abstraction of FLASH Operators}
\label{sec:abstraction_flash_operators}
A \flash operator defines what operation to conduct on the graph data, which takes one \vset as input and return one \vset. Conceptually, a \vset $A \subseteq V$ is a subset of the graph vertices $V$, while we only call them so while serving the input and output of a \flash operator. In this paper, we always denote $V$ as the whole \vset of the graph. In general, a \flash operator can be abstracted as
\begin{equation*}
O: \vIn \mapsto \vOut, \text{ s.t. } \delta: \vOut \mapsto_{@\Psi} \vOut, 
\end{equation*}
where $\vIn$ and $\vOut$ are the input and output \vsets, and $\delta$ denotes a side effect upon a collection of runtime properties $@\Psi$ over $\vOut$, which means that the operator may either create or update $\kappa(v, @p)$ for $\forall @p \in @\Psi$ and $v \in \vOut$. Specially, there is no side effect if $@\Psi = \emptyset$. We denote each element of an operator $O$ as $O.\vIn$, $O.\vOut$, $O.\delta$ and $O.@\Psi$. The ``$O.$'' can be omitted if the context is clear.

Given an ordered sequence of \flash operators $\{O_1, O_2, \ldots$ $, O_n\}$, there highlight two features of the \flash operators, namely \emph{procedural side effects} and \emph{operator chaining}.

\stitle{Procedural side effects.} The side-effects of \flash's operators are finalized in a procedural way. Specifically, the side effect $O_i.\delta$ for $\forall 1 \leq i < n$, must be finalized before proceeding to $O_{i + 1}$, and $O_j$ for $\forall i < j \leq n$ must be aware of the changes caused by $O_i.\delta$. Given $i$ and $j$ where $i < j$, if $@\Psi_{i, j} = O_i.@\Psi \cap O_j.@\Psi \neq \emptyset$, then for each $@p \in @\Psi_{i, j}$, the side effect conducted by $O_j$ \textbf{overwrites} that of $O_i$. The feature of procedural side effects may make \flash less pure as a functional programming language\cite{Jones1987}. However, it is not new to consider side effects in functional programming \cite{Josephs1986}, and it is also a common practice in graph programming \cite{Deutsch2019, Rodriguez2015}.

\stitle{Operator Chaining.} If we further have $O_i.\vOut = O_{i + 1}.\vIn$, $\forall 1 \leq i < n$, the operators can be chained as a \emph{composite} operator
\begin{equation*}
O := \bigodot_i O_i, \text{ s.t. } \delta: (\bigcup_i O_i.\vOut) \mapsto_{(\bigcup_i O_i.@\Psi)} (\bigcup_i O_i.\vOut). 
\end{equation*}
where $\bigodot_i O_i = O_1.O_2. \cdots .O_n$ and $O.\vIn := O_1.\vIn$, $O.\vOut $ $:= O_n.\vOut$, and the side effect of the composite operator subjects to all side effects of each individual operator. Implied by procedural side effects, a \flash program always finalizes the latest side effect among multiple ones conducted on the same runtime properties in operator chaining.

\subsection{The Primitive Operators}
We now define the three most primitive operators of \flash, namely \Filter, \Local and \Push, after which \flash is named. We also show examples of how to program with them. Each \flash operator extends the abstraction defined in \refsec{abstraction_flash_operators}, which implicitly carries the elements of $\vIn, \vOut, \delta$ and $@\Psi$. 

\stitle{Filter Operator.} Given the input \vset $A$, we define the \Filter operator to filter from $A$ to form the output \vset $B$ as
\begin{equation}
    \Filter(\vIn: A,\; \vOut: B,\; \delta: \NULL,\; @\Psi: \emptyset,\; f: \vIn \mapsto \{\top, \bot\}),
\end{equation}
where $f$ is a predicate that defines the filter criteria and we have $\vOut = \{ v | v \in \vIn \land f(v) = \top \}$. In the following, we will omit the items that are not important or are clear in the context in the operator. For \Filter operator, we often just keep the filter predicate $f$.

\begin{example}
\label{ex:filter}
Given an input \vset $A$, we want to filter out the vertices that are married under the age of 25 (suppose the ``age'' and ``married'' properties present), and assign the output \vset to $B$. We write the \Filter operator in both mathematics and \flash code as
\begin{table}[h]
\centering
\small
\begin{tabular}{l||l}
 Mathematics & 
 \begin{minipage}{3.5cm}{\begin{align*}
 & \Filter(\vIn: A,\; \vOut: B,\; f: \kappa(v, age) < 25 \;\land \\
 & \qquad\qquad \kappa(v, married) = \top, \forall v \in \vIn) 
 \end{align*}}\end{minipage} \\
 \hline
 \flash code & \code{B = A.Filter(|v| v.age < 20 \&\& v.married);} \\
\end{tabular}
\end{table}

In the above example, \code{A} is the input \vset and \code{B} holds the output. For each \flash operator in the following, as it is clear that we are accessing all vertices of the input \vset, we will omit the ``$\forall v \in \vIn$'' term in the mathematical expression. As for \flash code, it can be shortened using the placeholder ``\_'' as
\begin{lstlisting}[numbers=none,basicstyle=\ttfamily\small,]
B = A.Filter($\placeholder$.age < 25 && $\placeholder$.married);
\end{lstlisting}
\end{example}

\stitle{Local Operator.} The \Local operator is used to conduct side effect on the input \vset, which is defined as
\begin{equation}
    \Local(\delta: \vIn \mapsto_{@\Psi} \vIn).
\end{equation}
The local operator will directly return the input \vset after applying the side effect $\delta$, namely $\vIn = \vOut$. Given $v \in \vIn$, we typically write the side effect as $\kappa(v, @\Psi) \leftarrow \lambda(v)$ for each $@p \in @\Psi$, where the runtime property $@p$ will be created if not present, and then be assigned/updated as the value computed by some function $\lambda(v)$.

\begin{example}
\label{ex:local}
While computing single-source shortest path (SSSP), it is a routine to initialize the distance value of all vertices $V$ with some large value, e.g. $\infty$. We use \Local operator to realise this purpose as
\begin{table}[h!]
\centering
\small
\begin{tabular}{l||l}
 Mathematics & $\Local(\vIn: V, \; \delta: \kappa(v, @dist) \leftarrow \infty)$ \\
 \hline
 \flash code & \code{V.Local($\placeholder$.@dist = INT\_MAX);} \\
\end{tabular}
\end{table}

\comment{
Continue the SSSP example. It is further required that all paths are materialized, where each path result records the vertices the path goes through. We have configured a runtime property \code{@paths} in each vertex for the path results. In each round, a longer path can be constructed by expanding the previous path to contain the current vertex. Given the current input \vset as $A$ in certain round, we write
\begin{table}[H]
\small
\begin{tabular}{l||l}
 Mathematics & $\Local(\vIn: A,\;\delta: \pi \leftarrow \pi \cup \{v\}\; \forall \pi \in \kappa(v, @paths))$ \\
 \hline
 \flash code & \code{A.Local($\placeholder$.@paths.forEach(|p| p.push\_back($\placeholder$));} \\
\end{tabular}
\end{table}
}
\end{example}

\stitle{Push Operator.} Graph computation often needs to navigate from current vertices to other vertices, typically via the outgoing and incoming edges. Data may be exchanged along with the process. The \Push operator is defined for this purpose as
\begin{equation}
    \Push(\gamma: \vIn \mapsto \overrightarrow{E_{\vIn}}, \delta: \parallel \gamma(v) \mapsto_{\oplus(@\Psi)} \gamma(v)\;\parallel)
\end{equation}
Here $\gamma$ is called the \emph{route function} of \Push operator, which maps each input vertex $v$ to a collection of ``edges'' that are bounded by $v$. The edges can be either explicit or implicit. Explicit edges are actual edges in the graph, namely $E_v^+$, $E_v^-$ and $E_v^+ \cup E_v^-$. Given an arbitrary vertex subset $S \subset V$, the semantic of implicit edges allows the \Push operator to interpret $(v, v')$ for $\forall v' \in S$ as an ``edge'', regardless of whether or not it is an actual edge.  
As a result, we have the following options for $\gamma(v)$: (1) a collection of edges,  $E^+_v$, $E^-_v$ and $E^+_v \cup E^-_v$; (2) a collection of vertices, $N^+_v$, $N^-_v$, $N^+_v \cup N^-_v$ and a property $\kappa(v, S)$ that maintains a collection of vertices. Let $R_v$ denote the next vertices to navigate to, which is determined by $\gamma(v)$ as
\begin{equation*}
    R_v = \begin{cases}
    N_v^+, \text{ if } \gamma(v) = E_v^+,\\
    N_v^-, \text{ if } \gamma(v) = E_v^-,\\
    N_v^+ \cup N_v^-, \text{ if } \gamma(v) = E_v^+ \cup E_v^-,\\
    \gamma(v), \text{ otherwise. }
    \end{cases}
\end{equation*}
We then have $\vOut = \bigcup_{v \in \vIn} R_v$. 

We next discuss how \Push operator performs data delivery via the side effect $\delta$. Given an input vertex $v \in \vIn$ and the edges $E_v$ as the communication channels, there are two locations to maintain the delivered data: (1) the edge; (2) the other end of the edge. For case (1), the edge must be an \textbf{explicit} edge, where the data can be maintained as a runtime property of the edge. For case (2), the data from $v$ is expected to destine at every vertex of $R_v$. It is possible that one vertex $v'$ can belong to $R_v$ of multiple $v$s. We configure two tools in the \Push operator to cope with this issue. Firstly, a \emph{commutative} and \emph{associative} aggregate function $\oplus$ is offered to aggregate multiple side effects that will be finalized on the same vertex. Typical aggregate functions are collecting list/set using $list()$/$set()$, retrieving minimum/maximum value (when the data can be partially ordered) using $\min()$/$\max()$ and computing summation using $sum()$. Secondly, a barrier is placed to block the procedure until all vertices in $R_v$ have finalized the side effects. 

\begin{remark}
\label{rem:implicit_edges}
Graph systems \cite{Gonzalez2012, Malewicz2010, Rodriguez2015} typically only allows exchanging data via explicit edges. The capability of delivering data via the implicit edges greatly enhances \flash's programmability while implementing complex algorithms, which will be further explored in \refsec{optimized_analytical_query}. 
\end{remark}

\begin{example}
In the task of single-source-shortest-path (SSSP), each edge has a weight property ``w'' and each vertex holds a runtime property ``@d'' that records its current distance to the source vertex. In each step, every vertex sends a new distance value to its outgoing neighbors using \Push operator as
\begin{table}[h!]
\centering
\small
\begin{tabular}{l||l}
 Mathematics & 
 \begin{minipage}{3.5cm}{\begin{align*}
 & \Push(\vIn: A,\; \vOut: B,\; \gamma: \vIn \mapsto \overrightarrow{E^+_{\vIn}}, \\
 & \qquad\quad \delta: \parallel \kappa(e[1], @d) \leftarrow \\
 & \qquad\qquad \min(\kappa(e[0], @d) + \kappa(e, w))\; \forall e \in \gamma(v) \parallel) 
 \end{align*}}\end{minipage} \\
 \hline
 \flash code & 
 \begin{minipage}{3.5cm}{\begin{align*}
 &\code{B = A.\Push(\placeholder.outE(}\\
   & \qquad\qquad \code{|e| e.dst.@d = min(\placeholder.@d + e.w)));} 
 \end{align*}}\end{minipage} \\
\end{tabular}
\end{table}

Here, the lambda function specified after \code{outE} in the \flash code indicates a ``foreach'' operation that applies the side effect to each element in the collection.   

In the SSSP algorithm, if all edges have unit weight, we can simply write
\comment{
\begin{table}[H]
\small
\begin{tabular}{l||l}
 Mathematics & 
 \begin{minipage}{3.5cm}{\begin{align*}
 & \Push(\vIn: A,\; \vOut: B,\; \gamma: \vIn \mapsto \overrightarrow{N^+_{\vIn}}, \; \delta: \parallel \kappa(v', @d) \\
 & \qquad \leftarrow \min(\kappa(v, @d) + 1)\; \forall v' \in \gamma(v) \parallel) 
 \end{align*}}\end{minipage} \\
 \hline
 \flash code & \code{B = A.\Push(\placeholder.out(|v| v.@d = min(\placeholder.@d + 1)));}
\end{tabular}
\end{table}}
\begin{lstlisting}[numbers=none,basicstyle=\ttfamily\small,]
B = A.Push($\placeholder$.out(|v| v.@d = min($\placeholder$.@d + 1)));
\end{lstlisting}
The outgoing neighbors instead of edges are specified in the route function according to the implicit-edge semantic.

We can simply use \Push to navigate among the vertices without performing data exchange as
\begin{lstlisting}[numbers=none,basicstyle=\ttfamily\small,]
B = A.Push($\placeholder$.out);
\end{lstlisting}
\end{example}

In the following, we will make definitions and prove theorems using mathematics for strictness, while giving examples in \flash codes for easy reading.

\subsection{The Control Flow}
The graph database community has long observed the necessity of incorporating the control flow into graph processing to better support iterative graph algorithms, and such practice has been widely adopted \cite{Gonzalez2012, Gonzalez2014, Malewicz2010, Rodriguez2015}. This motivates us to define the control flow for \flash.

We consider a set of \flash operators $\Omega = \{O_1, O_2, \ldots, O_n\}$ (orders not matter here), a set $\Theta = \{\beta_1, \beta_2, \ldots, \beta_k\}$, where $\beta_i$ ($\forall 1 \leq i \leq k$) is a predicate that returns true/false value from the runtime contexts. The \flash control flow is a finite set $\mathbb{F} \subseteq (\Omega \times ((\Theta \times \{\top, \bot\}) \cup \{\NULL\}) \times \Omega)$. For $(O_p, (\beta_x, \top), O_s) \in \mathbb{F}$, it means that the successor $O_s$ will be scheduled after $O_p$ if $\beta_x$ returns true. As for $(O_x, \NULL, O_y) \in \mathbb{F}$, it indicates the chaining of $O_x$ and $O_y$. 

To complete the control flow, we further define five special operators: \FlashInput, \Fin, \FlashSwitch, \LoopS and \LoopE. $\FlashInput(A)$ specifies the input \vset $A$ to the \flash machine, and \Fin indicates the termination of a \flash program. As a result, we must have $\{\FlashInput, \Fin\} \subset \Omega$. We now detail \FlashSwitch for branching control, and \LoopS and \LoopE for loop control.

\stitle{Branching Control.} We define the \FlashSwitch operator to control the branching of a \flash program. The \FlashSwitch operator occurs in pair in the \flash control flow. That is, if $\exists (\FlashSwitch, (\beta_x, \top), O_t) \in \mathbb{F}$, then there must be one $O_f \in \Omega$, such that $(\FlashSwitch, (\beta_x, \bot), O_f) \in \mathbb{F}$. In other words, the \FlashSwitch operator schedules two successive branches of the program, one via $O_t$, and the other via $O_f$, in case whether $\beta_x$ returns true or false. 

\stitle{Loop Control.} Loop control is critical for a graph algorithm. Two operators - \LoopS and \LoopE - are defined to enclose a loop context. We say an operator $O$ is enclosed by the loop context, if there exists a cycle in the control flow from $O$ to itself. There must exist a pair of transitions - $(O_x, (\beta_t, \top), O_{z_1}) \in \mathbb{F}$ and $(O_y, (\beta_t, \bot), O_{z_2}) \in \mathbb{F}$ - where $O_{z_1}$ is enclosed by the loop context, while $O_{z_2}$ is not, and $O_x = O_y$ or there is a path connecting $O_x$ and $O_y$ with all operators it goes though enclosed by the loop context. We call the former the \emph{Feedback} and the latter the \emph{Exit}. Within the loop context, the \flash runtime will maintain a loop counter $\#_l$ that records how many times the current loop proceeds. 


\begin{figure}[htb]
  \centering
  \includegraphics[scale=0.9]{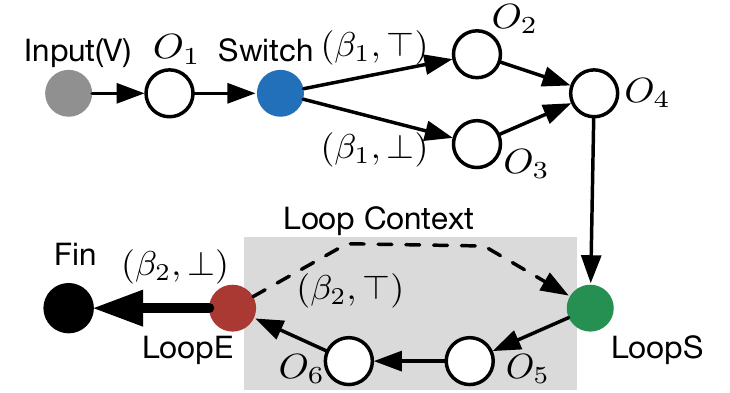}
  \caption{\small{An example of \flash control flow.}}
  \label{fig:flash_control_flow}
\end{figure}
\begin{example}
\label{ex:control_flow}
Given $\Omega = \{\FlashInput(V), \FlashSwitch, \LoopS, \LoopE,$ $O_1,$ $O_2, O_3, O_4, $ $O_5, O_6, \Fin\}$,  and $\Theta = \{\beta_1, \beta_2\}$, we now construct a \flash control flow as
\begin{equation*}
\small
\begin{split}
    \mathbb{F} = \{ &(\FlashInput(V), \NULL, O_1), (O_1, \NULL, \FlashSwitch), (\FlashSwitch, (\beta_1, \top), O_2), \\
    & (\FlashSwitch, (\beta_1, \bot), O_3), (O_2, \NULL, O_4), (O_3, \NULL, O_4), \\
    & (O_4, \NULL, \LoopS), (\LoopS, \NULL, O_5), (O_5, \NULL, O_6), \\
    & (O_6, \NULL, \LoopE), (\LoopE, (\beta_2, \top), \LoopS), (\LoopE, (\beta_2, \bot), \Fin) \}
\end{split}
\end{equation*}
\reffig{flash_control_flow} visualizes this control flow, where $\NULL$ conditions are omitted for clear present. Clearly, $O_5$ and $O_6$ are the operators enclosed by the loop context. The Feedback and Exit are marked with dashed arrow and thick arrow, respectively. 

\comment{
\begin{align*}
    \code{if(...) \{...\} else if(...) \{...\} else \{...\}},
\end{align*}
and loop can be expressed in any form of
\begin{align*}
    \begin{split}
        & \code{while(..) \{ ... \}} \\
        & \code{do \{ ... \} while(..)} \\
        & \code{for(i = 0; i < max; ++i) \{...\}} \\
    \end{split}
\end{align*}
The \code{break} keyword is also be used to explicitly jump out from a loop context. Thus, we can write the above control flow as
\begin{center}
\begin{minipage}{0.6\linewidth}
\begin{lstlisting}[numbers = none]
A = V.$O_1$(...);
if ($\beta_1$ returns true) {
    A = A.$O_2$(...).$O_4$(...);
} else {
    A = A.$O_3$(...).$O_4$(...);
}
do {
    A = A.$O_5$(...).$O_6$(...);
} while ($\beta_2$ returns true);
\end{lstlisting}
\end{minipage}
\end{center}
\comment{
\begin{align*}
\begin{split}
     & \code{A = V.$O_1$(...);} \\
     & \code{\FlashIf ($\beta_1$) \{} \\
     & \qquad \code{A = A.$O_2$(...).$O_4$(...);} \\
     & \code{\} \FlashElse\xspace\{} \\
     & \qquad \code{A = A.$O_3$(...).$O_4$(...);} \\
     & \code{\}} \\
     & \code{\FlashDo\xspace \{} \\
     & \qquad \code{A = A.$O_5$(...).$O_6$(...);} \\
     & \code{\} \FlashWhile($\beta_2$);}
\end{split}
\end{align*}
}
}
\end{example}

\subsection{The FLASH Machine and its Turing Completeness}
\label{sec:flash_machine}
We are now ready to define the \flash machine. Given a runtime property graph $G$, a set of \flash operators $\Omega$ (both primitive operators and control operators), a set of boolean expressions $\Theta$, and the control flow $\mathbb{F}$, we define a \flash machine as a 4-tuple $\mathcal{M}_f = (G, \Omega, \Theta, \mathbb{F})$. We show that a \flash machine is Turing complete.

\stitle{Turing Completeness.} A Turing machine \cite{Hopcroft2000} is consisted of a finite-state-controller, a tape, and a scanning head. The tape can be divided into infinite number of cells, each can maintain any one of the symbols withdrawn from a finite symbol set $\Sigma$. The scanning head, while pointing to one cell anytime, can move leftwards or rightwards by one cell on the tape, based on the state of the machine and the symbol of current cell. While moving the head, a write operation may occur to overwrite the symbol of current cell. The head can move infinitely to both left and right at any point. 

Formally, a Turing machine $\mathcal{M}_t$ is a 5-tuple $(Q, \Sigma, s, F, \lambda)$, where
\begin{itemize}
    \item $Q$ is an finite set of states of the machine;
    \item $\Sigma$ is the alphabets accepted by the machine;
    \item $s \in Q$ is the initial state, $F \subset Q$ are a finite set of finishing states. 
    \item $\lambda: Q\setminus F \times \Sigma \mapsto (Q \times \Sigma \times \{L, R\})$ is the state transition function, where $L, R$ denote the left and right direction while moving the head. Given $(q, s_1)$ where $q \in Q \setminus F$ and $s_1 \in \Sigma$, let $\lambda((q, s_1)) = (p, s_2, L)$, it means when the machine is in state $p$, and the current cell has symbol $s_1$, the machine will transit to state $q$, and the head will move leftwards on the tape after writing the symbol $s_2$ in current cell.   
\end{itemize}

A system of data-manipulation rules is considered to be \textbf{Turing complete} if it can simulate the above Turing machine. If a system is Turing complete, it can be programmed to solve any computable problem \cite{turing1936}. 

\begin{theorem}
The \flash machine is Turing complete.
\end{theorem}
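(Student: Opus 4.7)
The plan is to simulate an arbitrary Turing machine $\mathcal{M}_t = (Q, \Sigma, s, F, \lambda)$ inside the FLASH machine, using the graph to encode the tape, vertex runtime properties to encode symbols and state, and the control flow to encode $\lambda$. Concretely, I would take the input graph to be a long doubly-linked chain of vertices $v_0, v_1, \ldots, v_N$ with directed edges carrying the relative \code{out} (\emph{right}) and \code{in} (\emph{left}) roles, so that each vertex plays the role of one tape cell. On each vertex I declare runtime properties \code{@sym} $\in \Sigma$ holding the cell's symbol, a boolean \code{@head} marking the current scanning position, and \code{@state} $\in Q$ holding the machine state (stored redundantly, or only on the head vertex). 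An initial \Local on $V$ paints the blanks, writes the input word onto the first few cells, and seeds \code{@head} and \code{@state} on $v_0$.

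To simulate one step of $\lambda$, I would place all the per-step work inside a \LoopS/\LoopE context whose exit predicate $\beta$ is ``the head vertex has \code{@state} $\in F$.'' Inside the loop, a \Filter isolates the single head vertex $A = V.\Filter(\placeholder.@head)$. Since $Q \times (\Sigma \setminus F)$ is a \emph{finite} set of pairs, the transition function $\lambda$ can be unrolled into a finite, hard-coded decision tree built from nested \FlashSwitch operators, each testing a predicate of the form ``\code{@state} $= q$ and \code{@sym} $= \sigma$''. In the branch corresponding to $\lambda(q, \sigma) = (q', \sigma', D)$ I would use a \Local on $A$ to rewrite \code{@sym} $\leftarrow \sigma'$ and clear \code{@head}, and then use a \Push whose route function is $\placeholder.\code{out}$ or $\placeholder.\code{in}$ depending on $D \in \{L, R\}$, with the aggregate side effect setting \code{@head} $\leftarrow \top$ and \code{@state} $\leftarrow q'$ on the unique neighbor along the tape chain. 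The uniqueness of that neighbor means the aggregate $\oplus$ is trivially well-defined. After all branches rejoin, control returns to \LoopE, whose feedback edge continues the simulation and whose exit edge reaches \Fin when a final state is hit. This is a direct, one-to-one simulation of $\mathcal{M}_t$: step $k$ of the FLASH loop matches step $k$ of the Turing machine.

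The main obstacle is the mismatch between the \emph{unbounded} Turing tape and the \emph{finite} vertex set of the runtime property graph. I would address this in the standard way used for such equivalence results: for any fixed input, pick $N$ larger than the number of cells the TM ever visits on that input, so the simulation is exact for the entire (possibly non-halting) run up to any finite prefix; Turing completeness then follows in the same sense in which, say, a lambda calculus interpreter running on an abstract machine with arbitrarily large cons cells is Turing complete. If one insists on a self-contained unbounded tape within a single execution, an alternative is to replace the chain with a pair of \Push-based ``extend left'' and ``extend right'' subroutines that, upon the head reaching the current boundary, promote a fresh blank vertex from a reservoir and rewire the chain; the existence of such a reservoir is harmless because $G$ is a parameter of $\mathcal{M}_f$ and can be taken as large as desired.

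Beyond the tape issue, the rest is largely bookkeeping: verifying that each ingredient I use is licensed by the definitions in Sections~3.1--3.3, in particular that (i) the branching performed by nested \FlashSwitch legally partitions the runtime on the finite set $Q \times \Sigma$, (ii) the procedural side-effect semantics guarantees that the rewrite of \code{@sym} by \Local is visible to the subsequent \Push, and (iii) the loop counter and feedback/exit transitions conform to the loop-context rules so that \LoopE correctly returns to \LoopS until $\beta$ flips. Assembling these observations yields $\mathcal{M}_f \simeq \mathcal{M}_t$ step-for-step, and hence the Turing completeness of the FLASH machine.
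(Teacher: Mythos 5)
Your construction is essentially the paper's: a chain of vertices serves as the tape, runtime properties hold the cell symbol and machine state, a \Filter isolates the active cell each round, the transition is applied there, a \Push moves the head to the unique neighbouring cell, and a loop context iterates until a final state is reached. The differences are mechanical rather than conceptual: the paper folds $\lambda$ directly into a single \Local side effect and writes the move direction into a runtime property \code{@to} ($N^+_v$ or $N^-_v$) that the \Push then uses as implicit edges, so no \FlashSwitch decision tree is needed; and it marks the head implicitly as the unique vertex whose \code{@state} is non-final, with the loop predicate $|O_1.\vOut| \neq 0$, rather than an explicit \code{@head} flag.

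The one point where you diverge substantively is the unbounded-tape issue, and it is the weak spot of your write-up: the paper simply takes the chain to be infinite (vertices indexed by all integers), which the runtime property graph model does not forbid, so no bounding or extension argument is needed. Your first workaround --- fixing $N$ larger than the number of cells ever visited on the given input --- does not cover non-halting runs that use unbounded space, and your reservoir variant is still a finite graph, so strictly speaking neither yields simulation of an arbitrary Turing machine within a single execution. The fix is trivial (adopt the infinite chain, as the paper does), but as stated this step of your argument is weaker than the rest.
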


\begin{proof}
Consider a Turing machine $\mathcal{M}_t = (Q, \Sigma, s, F, \lambda)$. Let's first construct a directed graph $G$ of infinite number of vertices, and each vertex is identified by a integer ranged from $-\infty$ to $\infty$. The edges of the graph only connect two vertices of consecutive ids, namely $(v_i, v_j) \in E$ iff. $j = i + 1$. Now the graph $G$ simulates the tape, where each vertex represents a cell. Each vertex further takes two runtime properties $@symbol$ and $@state$, to maintain the allowed symbols and states of the Turing machine. Initially, only one vertex, saying $v_0$ without loss of generality, holds the initial state of the Turing machine, namely $v_0.@state = s$, while the other vertices hold the finishing state in $F$. 

We then configure the following operators:
\mathleft
\begin{equation*}
\small
    O_1: \Filter(f: \kappa(v, @state) \not\in F),
\end{equation*}
\begin{equation*}
\small
\begin{split}
        O_2&: \Local(\delta: \text{With } \lambda((\kappa(v, @symbol), \kappa(v, @state))) = (q, s, D), \\
        &\qquad\qquad\kappa(v, @symbol) \leftarrow q; \\
        & \qquad\qquad\kappa(v, @state) \leftarrow s; \\ 
        & \qquad\qquad \kappa(v, @to) \leftarrow \begin{cases}
                N^+_v, \text{ if } D = R, \\
                N^-_v, \text{ if } D = L.
            \end{cases} \\
        &),
    \end{split}
\end{equation*}

\begin{equation*}
\small
\begin{split}
        O_3&:\Push(\gamma: \vIn \mapsto \{\kappa(v, @to) \;|\; \forall v \in \vIn\}, \\
        & \qquad\qquad \delta: \parallel \kappa(v', @state) \leftarrow \min(\kappa(v, @state))\;\forall v' \in \gamma(v)\parallel).
\end{split}
\end{equation*}
In the \Local operator $O_2$, we create the $@to$ property to control how the \Push operator navigates to simulate the movement of the Turing machine. Specifically, $\kappa(v, @to)$ is assigned as $N^+_v$ (resp. $N^-_v$) to simulate the Turing machine for moving the head rightwards (resp. leftwards).
\mathcenter

We let $\Omega = \{\FlashInput(V), O_1, O_2, O_3, \LoopS, \LoopE, \Fin\}$, and define the set of predicates $\Theta = \{\beta_1: |O_1.\vOut| \neq 0\}$. Finally, we set up the control flow $\mathbb{F}$ as:
\begin{equation*}
\small
\begin{split}
    \mathbb{F} = \{ & (\FlashInput(V), \NULL, O_1), (O_1, \NULL, \LoopS), (\LoopS, \NULL, O_2), \\ 
    & (O_2, \NULL, O_3), (O_3, \NULL, O_1), (O_1, (\beta_1, \top), \LoopS) \\
    & (O_1, (\beta_1, \bot), \Fin) \},
\end{split}
\end{equation*}
which is visualized in \reffig{flash_turing}.

We then construct the \flash machine $\mathcal{M}_f(G, \Omega, \Theta, \mathbb{F})$ that simulates the Turing machine $\mathcal{M}_t$. In this way, given any Turing machine, we can always simulate it using a \flash machine, which completes the proof.  
\end{proof}
\begin{figure}[htb]
  \centering
  \includegraphics[scale=0.9]{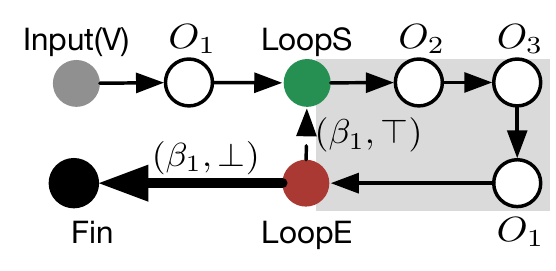}
  \caption{\small{The control flow of a \flash machine that simulates the Turing machine.}}
  \label{fig:flash_turing}
\end{figure}

\comment{
\begin{remark}
It may not be realistic to construct a graph of infinite number of vertices. This assumption corresponds to the undecidability of the halting problem \cite{Davis1958} of a Turing machine. As a matter of fact, the Turing machine \cite{Hopcroft2000} has been constructed to have an input tape of infinite number of cells for this reason. This shows the soundness of our assumption.
\end{remark}
}

\subsection{The Auxiliary Operators}
\label{sec:auxiliary_operators}
We have shown that \flash is Turing complete using the three most primitive operators \Filter, \Local and \Push, while sometimes it can be awkward to just have these operators. We thus introduce some auxiliary primitive operators to further facilitate the \flash programming. 

\stitle{Pull Operator.} The \Pull operator is defined analogously to \Push operator as
\begin{equation}
    \Pull(\gamma: \vIn \mapsto \overrightarrow{E_{\vIn}},\; \delta: \parallel \vIn \mapsto_{\oplus(@\Psi)} \vIn\parallel).
\end{equation}
Similarly, we allow both explicit edges and implicit edges for data delivery. Instead of sending data out from the input \vset, \Pull is used to retrieve data from either the edge or the other end of the edge. The output \vset of \Pull operator remains as $\vIn$, and the side effect will also function on $\vIn$. 

\Pull operator is a complement of \Push operator in two aspects. Firstly, sometimes we may want to gather data from the other vertices without navigation. This functionality can be realized by chaining two \Push operators, which is apparently more redundant and can be less efficient than using one \Pull operator. Secondly, it has been observed that sometimes a \Pull-style communication instead of \Push can improve the performance of graph processing \cite{Grossman2018, Grossman2019}. 

\comment{
A \Pull operator can be simply simulated using two \Push operators as
\begin{equation*}
\begin{split}
    & \Pull(\gamma, \parallel\delta\parallel) := \Push(\gamma: \Pull.\gamma)\\
    & \qquad\qquad .\Push(\gamma: \gamma^-(v),\; \delta: \parallel \gamma^-(v) \mapsto_{\oplus(@\Psi)} \gamma^-(v)\parallel),
    \end{split}
\end{equation*}
where $\gamma^-(v) = \{v'\;|\; \exists v \in \Pull.\gamma(v')\}$. Note that here we use the first \Push operator without a side effect, which means navigating to the next vertices without data exchange.

\stitle{Group Operator.} A common use case in data analysis is to group the data according to certain properties. To do this, we define the \Group operator as
\begin{equation*}
    \Group(\pi: \Psi \cup @\Psi, \delta: \parallel V^r \mapsto_{\oplus(@\Psi)} V^r  \parallel),
\end{equation*}
$\pi: \Psi \cup @\Psi$ are projection of properties (static and runtime) for grouping; the second variable is a side effect that defines what data to aggregate for each group of vertices. To keep the interface consistent, we let the \Group operator \textbf{create and return} the runtime vertices, each of which holds the data of one group. As such, these vertices are also called the grouped vertices. The properties of the grouped vertices must include the properties $\Psi \cup @\Psi$ for grouping, and can also include those that are created using the side effect to maintain the aggregated data. Same as \Push and \Pull operators, an aggregate function $\oplus$ and a barrier are applied for data aggregation.
}

\stitle{Other Operators.} Other auxiliary operators used in this paper include: \Group operator for grouping the input \vset according to given properties; \Order operator for ordering the input \vset by given properties; \FlashOutput operator for materializing the given properties of the input \vset to an output device (by default the standard output). 

\begin{example}
We use the following example to give our first impression on \flash's programming for graph analysis. We process the DBLP dataset \cite{data-dblp} to contain the vertices (properties) of ``Author(name)'' and ``Paper(name, venue, year)'', and edges from ``Authors'' to ``Papers'' of label ``write'', from ``Papers'' to ``Authors'' of label ``written\_by''.

Consider the query: what is the venue (conference + year) in which a given author published most papers? We express the query using \flash as
\begin{lstlisting}[frame=tb]
Pair<string, int> @venue;
int @cnt;
V.Filter($\placeholder$.name == query_author)
 .Push($\placeholder$.outE[|e| e.label == "write"])
 .Local($\placeholder$.@venue = ($\placeholder$.conf, $\placeholder$.year))
 .Group(@venue, |v| v.@cnt = sum(1))
 .Order(@cnt(DESC), 1);
 .Output(@venue, @cnt);
\end{lstlisting}
We begin the \flash codes by declaring two runtime parameters ``\code{@venue}'' and ``\code{@cnt}'' with their data types (line~1-2). The \Filter operator returns the given author's vertex (line~3), from which \Push is used to navigate to the papers the author has written via the ``write'' edges (line~4). Note that we write ``\code{\placeholder.outE[filter]}'' to apply the edge filter. Then we use the \Local operator to assign ``\code{@venue}'' as the pairing of conference's name and the year it was hosted (line~5), which is used as the property to group the vertices (line~6). 

For interface consistency, we let the \Group operator create and return the runtime vertices, which only present in the lifetime of the \flash program. Each runtime vertex holds the data of a group (thus is also called a grouped vertex). In this query, each grouped vertex will maintain two properties, namely ``\code{@venue}'' and ``\code{@cnt}'', where ``\code{@venue}'' is inherited from the grouping operation and ``\code{@cnt}'' is specified using the side effect in the \Group operator that computes the number of vertices being grouped (line~6). To obtain the venue the given author published most papers, we use \Order operator to rank these grouped vertices via the ``\code{@cnt}'' property in a descending order, and return the very first result (line~7), which are finally printed to screen using the \FlashOutput operator (line~8). 
\end{example}

From the above example, one may find it intuitive to read and write \flash codes. Certainly, it is up to the reader to judge, and we have prepared more examples of \flash codes in a Jupyter Notebook \cite{flash-code} for the readers to play with \flash.

\comment{
\sstitle{Q: What's the top-10 coauthors of a given author regarding the number of publications?}

\begin{lstlisting}[frame=tb]
int @cnt;
V.Filter($\placeholder$.name == query_author)
 .Push($\placeholder$.outE[|e| e.label == "write"])
 .Push($\placeholder$.outE[|e| e.label == "written_by"]
        (|e| e.dst.@cnt = sum(1)))
 .Order(@cnt(DESC), 10)
 .Output(name, @cnt);
\end{lstlisting}
Here we introduce the \FlashOutput operator to materialize the given properties of the input \vset. The default output device is the standard output, while more options such as a database store may be configured in the future.

\sstitle{Q: What are the papers that are co-authored by a given author's all top-3 co-authors?}

The co-author semantic follows the previous query. We can reuse the above code by assuming that we have a \code{TopkCoauthors(k)} that returns the top-k coauthors. We then express the query as
\begin{lstlisting}[frame=tb]
int @cnt;
V.TopkCoauthors(3)
 .Push($\placeholder$.outE[|e| e.label == "write"]
        (|e| e.dst.@cnt = sum(1)))
 .Filter($\placeholder$.@cnt == 3)
 .Output(*);
\end{lstlisting}
Here the \code{*} means materializing all properties (static and runtime) and the label.
}

\section{Graph Queries}
\label{sec:flash_analytical_query}
In this section, we first show that \flash is capable of simulating the GAS model in \cite{Gonzalez2012}. Then we demonstrate \flash's expressiveness by giving examples of widely used queries. Finally, we exhibit \flash's programmability by optimizing representative queries using \flash, including a complete verification of the correctness of \flash's implementation of an optimized connected component algorithm \cite{Qin2014}.

\subsection{Basic Analytical Queries}
\label{sec:basic_analytical_query}
\stitle{GAS Model.} In \cite{Gonzalez2012}, the authors proposed the \emph{GAS} model to abstract the common structure of analytical queries on graphs, which consists of three conceptual phases, namely \textbf{gather}, \textbf{apply} and \textbf{scatter}. The GAS model allows three user-defined functions $\lambda_g$, $\lambda_a$ and $\lambda_s$ in the above three phases to accommodate different queries.   

In the gather phase, each vertex $v \in V$ collects the data from all its incoming edges and vertices, which is formally defined as:
\begin{equation*}
    A_v \leftarrow \bigoplus_{e \in E^-_v}(\lambda_g(D_{e[0]}, D_e, D_{e[1]})),
\end{equation*}
where $\bigoplus$ is a commutative and associative aggregate function, and $D_x$ represents the data of corresponding vertex or edge.

While gathering all required data from the neighbors, each vertex $v$ uses the gathered data and its own data to compute some new data to update the old one. The apply stage is defined as:
\begin{equation*}
    D_v^* \leftarrow \lambda_a(A_v, D_v), 
\end{equation*}
where $D_v$ and $D_v^*$ represent the old and new vertex data. 

The scatter phase then uses the newly computed data to update the value of the outgoing edges as:
\begin{equation*}
\forall e \in E^+_v, D_e \leftarrow \lambda_s(D_{e[0]}, D_e, D_{e[1]})
\end{equation*}

The MLDM graph algorithms typically run the GAS phases iteratively until converge or reaching maximum iterations.  

\stitle{GAS via FLASH.} We now simulate the GAS model using \flash. Observe that \Local operator corresponds to the apply phase, while \Pull and \Push operator can represent gather and scatter phases, respectively. We define:
\mathleft
\[\small
        O_1: \Local(\delta: \kappa(v, @D_v) \leftarrow \lambda_i(v)),
\]
\[\small
\begin{split}
        &O_2: \Pull(\gamma: \vIn \mapsto \overrightarrow{E^-_{\vIn}},\; \delta: \parallel \kappa(e[1], @A_v) \leftarrow \oplus(\lambda_g(\\
        & \quad\qquad \kappa(e[0], @D_v), \kappa(e, @D_e), \kappa(e[1], @D_v)))\;\forall e \in \gamma(v)\parallel),
\end{split}
\]
\[\small
        O_3: \Local(\delta: \kappa(v, @D^*_v) \leftarrow \lambda_a(\kappa(v, @A_v), \kappa(v, @D_v))),
\]
\[\small
        O_4: \Filter(f: \lambda_c(\kappa(v, @D^*_v), \kappa(v, @D_v))),
\]
\[\small
        O_5: \Local(\delta: \kappa(v, @D_v) \leftarrow \kappa(v, @D^*_v)),
\]
\[\small
\begin{split}
        & O_6: \Push(\gamma: \vIn \mapsto  \overrightarrow{E^+_{\vIn}},\; \delta: \parallel \kappa(e, @D_e) \leftarrow \\
        & \quad\qquad \lambda_s(\kappa(e[0], @D_v), \kappa(e, @D_e), \kappa(e[1], @D_v))\;\forall e \in \gamma(v)\parallel),
\end{split}
\]
\mathcenter
where $O_1$ is used to initialize the runtime property $@D_v$ for each vertex $v$, $O_4$ helps check whether the algorithm converges while feeding the old and new data of the vertex to $\lambda_c$, and $O_2$, $O_3$ and $O_6$ corresponds to the gather, apply and scatter phases, respectively. Note that $O_6$ uses \Push to place data on the adjacent edges, where there is no need to apply aggregation. 

Let $\Omega = \{\FlashInput(V), O_1, O_2,$ $O_3, O_4, O_5, O_6, \LoopS, \LoopE, \Fin\}$. Define $\Theta = \{\beta_1: |O_4.\vOut| \neq 0 \land \#_l < m_l\}$, where $\#_l$ is the loop counter and $m_l$ denotes the maximum number of iterations. Let the control flow be:
\begin{equation*}
\small
\begin{split}
    \mathbb{F} = \{&(\FlashInput(V), \NULL, O_1), (O_1, \NULL, \LoopS), (\LoopS, \NULL, O) \\
                   & (O, (\beta_1, \top), O'), (O, (\beta_1, \bot), \Fin), (O', \NULL, \LoopS) \},
\end{split}
\end{equation*}
where $O = O_2.O_3.O_4$ and $O' = O_5.O_6$. We have the \flash machine $\mathcal{M}_f = (G, \Omega, \Theta, \mathbb{F})$ simulate the GAS model.

\stitle{Expressiveness of Flash.} Showing that the \flash machine can simulate the GAS model immediately indicates that \flash is capable of expressing a lot of widely-used analytical queries. In the following, we will demonstrate \flash's expressiveness by giving the examples of representative analytical queries. 

\begin{example}
\label{ex:flash_gas}
\sstitle{Weakly Connected Component (WCC).} A \emph{weakly connected component} is defined as a subset of vertices that where there is an \textbf{undirected} path (a path regardless of the directions of the edges) connecting them in $G$. The WCC problem aims at computing a set of vertex subset $\{V_1, V_2, \ldots, V_k\}$, such that each $V_i$ forms a maximum connected component, and $\bigcup_i V_i = V$. The \flash codes for WCC are given in the following, which are intuitive to read. 
\begin{lstlisting}[frame=tb]
int @cc;
int @precc;
int @cnt;
A = V.Local($\placeholder$.@cc = $\placeholder$.id);
while (A.size() > 0) {
    A = A.Push($\placeholder$.@both(|v| v.@precc = min($\placeholder$.@cc)))
         .Filter($\placeholder$.@precc < $\placeholder$.@cc)
         .Local($\placeholder$.@cc = $\placeholder$.@precc);
}
V.Group(@cc, |v| v.@cnt = sum(1));
\end{lstlisting}

\sstitle{Single-Source Shortest Path (SSSP).} Consider a weighted graph $G = (V, E)$ where all edges have a static \code{weight} property that is a non-negative integer. Given two vertices $s$ and $t$ that have a path in the graph, denoted as $\{e_1, e_2, \ldots, e_k\}$, where $e_i\;\forall 1 \leq i \leq k$ are edges the path goes through, we define the length of the path as $\Sigma_{i = 1}^k \kappa(e_i, w)$. Given a source vertex $s \in V$, the problem of SSSP is to compute the shortest path from $s$ to all other vertices. Below shows \flash's codes for SSSP (without materializing the path).
\begin{lstlisting}[frame=tb]
int @dist;
int @min_d;
A = V.Local($\placeholder$.@dist = INT_MAX)
     .Filter($\placeholder$.id == src_id)
     .Local($\placeholder$.@dist = 0);
while (A.size() > 0) {
    A = A.Push($\placeholder$.outE(|e| e.dst.@min_d = min($\placeholder$.@dist + e.weight)))
         .Filter($\placeholder$.@dist > $\placeholder$.@min_d)
         .Local($\placeholder$.@dist = $\placeholder$.@min_d);
}
\end{lstlisting}

\sstitle{PageRank (PR).} PR is invented by Sergay Brin and Larry Page \cite{Brin1998} to measure the authority of a webpage, which is the most important ranking metric used by Google search engine. The algorithm runs iteratively, while in the $i^{th} (i > 1)$ round,  the pagerank value of a given page $p$ is computed as $r^{(i)}(p) = \frac{1 - d}{|V|} + d\sum_{p' \in N^-_p} \frac{r^{(i-1)}(p')}{|N^+_{p'}|}$, where $d = 0.85$ is a damping factor. We give one implementation of PR using \flash in the following.

\begin{lstlisting}[frame=tb]
float @pr;
float @tmp;
float @new;
A = V.Local($\placeholder$.@pr = 1 / V.size());
while (A.size() > 0) {
    A = V.Local($\placeholder$.@tmp = $\placeholder$.@pr / $\placeholder$.out.size())
         .Push($\placeholder$.out(|v| v.@tmp = sum($\placeholder$.@tmp)))
         .Local($\placeholder$.@new = 0.15 / V.size() + 0.85 * $\placeholder$.@tmp)
         .Filter(abs($\placeholder$.@new - $\placeholder$.@pr) > 1e-10)
         .Local($\placeholder$.@pr = $\placeholder$.@new);
}
\end{lstlisting}
\comment{
\begin{algorithm}[htb]
\SetAlgoVlined
\SetFuncSty{textsf}
\SetArgSty{textsf}
\caption{\flash's Pagerank algorithm}
\label{alg:flash_pr}
\State{\code{A = V.\Local(\placeholder.@pr = 1 / V.size());} } \\
\State{\code{\FlashWhile(A.size() > 0) \{}} \\
\State{\qquad \code{A = V.\Local(\placeholder.@pushval = \placeholder.@pr / \placeholder.out.size())}} \\
\State{\qquad \qquad \code{.\Push(\placeholder.out(|v| v.@fromval = sum(\placeholder.@pushval)))} } \\
\State{\qquad\qquad \code{.\Local(\placeholder.@newpr = 0.15 + 0.85 * \placeholder.@fromval)} } \\
\State{\qquad\qquad \code{.\Filter(abs(\placeholder.@newpr - \placeholder.@pr) > 1e-10)} } \\
\State{\qquad\qquad \code{.\Local(\placeholder.@pr = \placeholder.@newpr);} } \\
\State{\code{\}}}
\end{algorithm}
}
\end{example}

\begin{remark}
\label{rem:gas_model_discuss}
The basic implementations of representative analytical queries in \refex{flash_gas} use only 8-11 LLoCs, which are evidently more succinct that both Green-Marl and GSQL (\reftable{analytical_queries}). We are aware that fewer lines of codes may not always be a good thing, at least not so regarding readability. However, as shown in \refex{flash_gas}, we believe that \flash codes are quite intuitive (often self-explanatory) to read.   
\end{remark}

\subsection{Optimized Analytical Queries}
\label{sec:optimized_analytical_query}
We show \flash's strong programmability using optimized implementations of the analytical queries. For easy presentation, we use the undirected graph model in this subsection, and we refer to the outgoing neighbors (resp. outgoing edges) as the adjacent neighbors (resp. adjacent edges) in the undirected graph. 

The reasons of \flash's strong programmability are mainly twofold. The first reason is that \flash's operators can be flexibly chained, which makes it easy to combine the operators in different ways to realise the same algorithm. As an example, we show in the following how we can flexibly combine \Push and \Pull operators to improve the CC (WCC becomes CC in undirected graph) algorithm in \refex{flash_gas}. 
\begin{lstlisting}[frame=tb, label=list:cc-pull, caption=A better CC algorithm that combines \Push and \Pull operators]
int @cc;
int @precc;
A = V.Local($\placeholder$.@cc = $\placeholder$.id);
while (A.size() > 0) {
    if (A.size() < V.size() / 2) {
        A = V.Pull($\placeholder$.@out[|v| $\placeholder$.@cc > v.@cc ](
                |v| $\placeholder$.@precc = min(v.@cc)))
             .Filter($\placeholder$.@precc != $\placeholder$.cc)
             .Local($\placeholder$.cc = $\placeholder$.@precc);
    } else {
        A = A.Push($\placeholder$.@out[|v| $\placeholder$.@cc < v.@cc ](
            |v| v.@precc = min($\placeholder$.@cc)))
         .Local($\placeholder$.@cc = $\placeholder$.@precc);
    }
}
\end{lstlisting}
The observation of the implementation in \reflist{cc-pull} is that when there are not many vertices to update its CC (line~5), a \Pull operation can often render less communication than \Push. Such an optimization is more often applied in the system level \cite{Grossman2018, Grossman2019}, while it is nearly impossible to tune the system to achieve the best configuration between \Pull and \Push for all queries. In comparison, \flash allows tuning the best tradeoff for each query directly in the code, which is often easier by looking into one single query.   

The second reason is that the semantic of implicit edges allows the vertices to communicate with any vertices in the graph. To show this, we continue to optimize the CC algorithm using the CC-opt algorithm proposed in \cite{Qin2014, Shiloach1982}. We are interested in this algorithm because that: (1) it is an algorithm with theoretical guarantee proposed to handle big graph in the parallel/distributed context, which fits \flash's design principle; (2) it involves data exchanging via implicit edges that can reflect the programmability of \flash.

\stitle{CC-opt via FLASH.} The CC-opt algorithm in \cite{Qin2014} utilizes a parent pointer $p(v)$ for each $v \in V$ to maintain a tree (forest) structure. Each rooted tree represents a connected component of the graph. The algorithm runs iteratively. In each iteration, the algorithm uses \code{StarDetection} to identify stars (tree of depth one), in which every vertex points to one common rooted vertex (the rooted vertex is self-pointing); then it merges stars that are connected via some edges using two \code{StarHooking} operations; finally it applies \code{PointerJumping} to assign $p(v) = p(p(v))$. When the algorithm terminates, there must be isolated stars, each standing for one connected component with the rooted vertex as its id. We show CC-opt's \flash implementation in \reflist{flash_optimized_cc}. The algorithm is divided into 5 components, namely \code{ForestInit}, \code{StarDetection}, conditional \code{StarHooking}, unconditional \code{StarHooking} and \code{PointerJumping}. Based on the expected output of each component, we explain the implementation and show its correctness in the following.

\begin{lstlisting}[frame=tb,caption=\flash's Optimized CC,label=list:flash_optimized_cc,escapechar=^]
ID @p; // the parent vertex
ID @new_p; // the new parent vertex
ID @grandp; // the grandparent vertex
int @cnt; // the number of children
bool @is_star; // whether the vertex is a star
bool @is_p_star; // whether parent is a star
// ForestInit ^\label{forestinit-begin}^
V.Local($\placeholder$.@p = min($\placeholder$.id, $\placeholder$.out.min())) ^\label{fi-setparent-1}^
 .Local($\placeholder$.@cnt = 0)
 .Filter($\placeholder$.@p != $\placeholder$.id) ^\label{fi-filter-1}^
 .Push($\placeholder$.@p(|v| v.@cnt = sum(1))); ^\label{fi-cnt-1}^
V.Filter($\placeholder$.@cnt == 0 && $\placeholder$.out.size() != 0 && $\placeholder$.@p == $\placeholder$.id) ^\label{fi-filter-2}^
 .Local($\placeholder$.@p == $\placeholder$.out.min());  ^\label{fi-setparent-2}^ ^\label{forestinit-end}^
do {
    StarDetection();
    // Conditional StarHooking
    StarHooking(true); 
    StarDetection();
    // Unconditional StarHooking
    StarHooking(false); 
    // PointerJumping ^\label{pointerjumping-begin}^
    A = V.Pull($\placeholder$.@p(|v| $\placeholder$.@new_p = min(v.@p)))
         .Filter($\placeholder$.@new_p != $\placeholder$.@p) ^\label{pj-filter}^
         .Local($\placeholder$.@p = $\placeholder$.@new_p); ^\label{pointerjumping-end}^
} while (A.size() != 0); ^\label{terminate}^
// The procedural of star detection
procedural StarDetection() { ^\label{stardetection-begin}^
    V.Local($\placeholder$.@is_star = true)
     .Pull($\placeholder$.@p(|v| $\placeholder$.@grandp = min(v.@p)))
     .Filter($\placeholder$.@p != $\placeholder$.@grandp) ^\label{sd-filter-1}^
     .Local($\placeholder$.@is_star = false) ^\label{sd-isstar-1}^
     .Push($\placeholder$.@grandp) ^\label{sd-push-1}^
     .Local($\placeholder$.@is_star = false); ^\label{sd-isstar-2}^
    V.Filter($\placeholder$.@is_star = true)
     .Pull($\placeholder$.@p(|v| $\placeholder$.@is_star = min(v.@is_star))); ^\label{sd-isstar-3}^
} ^\label{stardetection-end}^
// The procedual of star hooking
procedural StarHooking(is_cond) { ^\label{starhooking-begin}^
    A = V.Pull($\placeholder$.@p(|v| $\placeholder$.@is_p_star = min(v.@is_star && v == v.@p)))
         .Filter($\placeholder$.@is_p_star)
         .Pull($\placeholder$.out(|v| $\placeholder$.@np = min(v.@p)));
    if (is_cond) {
        A.Push($\placeholder$.@p(|v| v.@new_p = min($\placeholder$.@new_p)))
         .Filter($\placeholder$.@new_p < $\placeholder$.@p)
         .Local($\placeholder$.@p = $\placeholder$.@new_p); ^\label{sh-assign-new}^
    } else {
        A.Push($\placeholder$.@p(|v| v.@p = min($\placeholder$.@new_p)));
    }
} ^\label{starhooking-end}^
\end{lstlisting}

\begin{proposition}
\label{prop:forest_init_sound}
\code{ForestInit} in \reflist{flash_optimized_cc} is correct.
\end{proposition}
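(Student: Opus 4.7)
The plan is to verify two invariants for the graph state produced by \code{ForestInit}: (i) every $v \in V$ satisfies $\kappa(v, @p) \in \{v.\code{id}\} \cup \{u.\code{id} : u \in N^+_v\}$, so every parent pointer is either a self-loop or follows a real edge; (ii) the directed graph with arcs $v \to \kappa(v,@p)$ contains no cycles apart from self-loops at roots, i.e.\ it forms a forest. Together these imply that each resulting tree lies inside a single connected component of $G$, which is what subsequent phases of CC-opt rely on.

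First I would analyze the block on lines~\ref{forestinit-begin}--\ref{fi-cnt-1}. After line~\ref{fi-setparent-1}, $\kappa(v,@p) = \min(v.\code{id}, \min_{u \in N^+_v} u.\code{id})$; interpreting $\min$ over an empty set as $+\infty$, an isolated $v$ becomes self-pointing, while otherwise $v$ either is self-pointing (because its id is a local minimum) or points to a neighbor of strictly smaller id. This already establishes (i) and a partial version of (ii) for the intermediate state, since each non-self arc strictly decreases id. Lines~\ref{fi-filter-1}--\ref{fi-cnt-1} leave $@p$ alone and only set $\kappa(v,@cnt)$ equal to the number of neighbors $u$ with $\kappa(u,@p) = v.\code{id}$, i.e.\ the in-degree of $v$ in the intermediate forest.

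The crux is the block on lines~\ref{fi-filter-2}--\ref{forestinit-end}. The key lemma I would prove is: whenever $v$ passes the filter on line~\ref{fi-filter-2} (self-pointing, non-isolated, $\kappa(v,@cnt) = 0$), every $u \in N^+_v$ satisfies $\kappa(u,@p) < v.\code{id}$ strictly. The bound $\kappa(u,@p) \le v.\code{id}$ holds automatically because $v \in N^+_u$; strictness follows from $\kappa(v,@cnt) = 0$, which forbids $\kappa(u,@p) = v.\code{id}$. Since $v$ is self-pointing, all of its neighbors have strictly larger id, so line~\ref{fi-setparent-2} reassigns $\kappa(v,@p)$ to some $u.\code{id} > v.\code{id}$; yet $\kappa(u,@p)$ itself is unmodified (as $u$ is not self-pointing and hence not fixed) and has id strictly below $v.\code{id}$. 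Thus over any two consecutive hops of the final parent graph starting at a non-root, the id strictly decreases, which rules out cycles through $v$. I would separately note that two simultaneously fixed vertices cannot be adjacent (two adjacent self-pointing vertices would force equal ids), and that isolated vertices, excluded by $N^+_v \neq \emptyset$, remain self-pointing and correctly represent singleton components.

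The main obstacle I expect is formalizing the concurrent semantics of the second block: the values read by lines~\ref{fi-filter-2} and~\ref{fi-setparent-2} must be interpreted as the post--first-block state, and the simultaneous reassignments must not interfere. Appealing to the procedural side-effect semantics in \refsec{abstraction_flash_operators} handles this cleanly once stated precisely. A minor presentational caveat is that line~\ref{fi-setparent-2} displays ``\code{==}'' in the listing; I would proceed under the evidently intended assignment semantics, since a pure comparison would leave defective roots uncorrected and the second block vacuous.
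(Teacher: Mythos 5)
Your proposal proves a genuinely different — and strictly weaker — statement than what the paper's proof establishes, and that difference is a real gap. The paper's notion of correctness for \code{ForestInit} is that the \flash code reproduces the exact postcondition of \code{ForestInit} in \cite{Qin2014}: $p(v)=\min(\{v\}\cup N^+_v)$ when $v$ is isolated or not a singleton, and $p(v)=\min(N^+_v)$ when $v$ is a non-isolated singleton. The whole proof therefore reduces to an \emph{if and only if} characterization: a vertex reaches the reassignment in line~\ref{fi-setparent-2} exactly when it is a non-isolated singleton. This is what lets the later theorem delegate the correctness (and convergence behaviour) of the full algorithm to \cite{Qin2014}. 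Your invariants (i) parent pointers follow edges or are self-loops and (ii) the parent graph is acyclic apart from root self-loops are consequences of that postcondition but do not pin it down: for instance, a version of the code that skipped lines~\ref{fi-filter-2}--\ref{fi-setparent-2} entirely (or set $p(v)=v$ everywhere) would satisfy both invariants, yet would not implement \code{ForestInit} as specified, and your claim that ``each resulting tree lies inside a single connected component'' is all that subsequent phases rely on is asserted rather than proved — the paper never argues the downstream phases from such weakened preconditions; it matches the spec and cites \cite{Qin2014}.

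Concretely, your key lemma covers only one direction of the needed equivalence: if $v$ passes the filter in line~\ref{fi-filter-2}, then every $u\in N^+_v$ has $\kappa(u,@p)<v$ (your argument for this, via $v\in N^+_u$ and $\kappa(v,@cnt)=0$, is sound, as is your use of it to exclude cycles, since the new parent $u$ is itself not reassigned). What is missing is the converse — that every non-isolated singleton in the sense of \cite{Qin2014} does survive the filter, i.e.\ its neighbors all pass the filter of line~\ref{fi-filter-1}, none of them adopts $v$ as parent, so $\kappa(v,@cnt)=0$ — together with the conclusion that the final value of $@p$ coincides with the specified $\min(N^+_v)$, and that all other vertices retain $\min(\{v\}\cup N^+_v)$ from line~\ref{fi-setparent-1}. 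Without that direction you cannot exclude leftover non-isolated singleton trees, which is precisely the defect \code{ForestInit} exists to remove. Your observations about the \code{@cnt} semantics, the procedural side-effect ordering, and the ``\code{==}'' typo in line~\ref{fi-setparent-2} are all correct and match the paper's reading, but the proof as planned establishes forest-shapedness rather than the proposition the paper needs.
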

\begin{proof}
    According to \cite{Qin2014}, after \code{ForestInit} of CC-opt, $\forall v \in V$, we have
    \begin{equation*}
        p(v) = \begin{cases} 
            \min(\{v\} \cup N^+_v), \text{$v$ is isolated, or $v$ is not a singleton,} \\
            \min(N^+_v), \text{$v$ is a non-isolated singleton},
            \end{cases}
    \end{equation*}
    where a singleton is such a vertex $v$ that either $N^+_v = \emptyset$, or $\forall v' \in N^+_v$, $v < v'$, and $\exists v'' \in N^+_{v'}$ s.t. $v'' < v'$. Line~\ref{forestinit-begin}-\ref{forestinit-end} in \reflist{flash_optimized_cc} show the process of \code{ForestInit}, where $p(v)$ is maintained as the runtime property \code{@p}. Note that the assignment of $p(v)$ only happens in line~\ref{fi-setparent-1} and line~\ref{fi-setparent-2}, corresponding to the above two cases. We only need to show that a vertex $v$ can arrive at line~\ref{fi-setparent-2} if and only if it is a non-isolated singleton.   
    
    (If.) If $v$ is a non-isolated singleton, according to the definition of singleton, $\forall v' \in N^+_v$ must survive the \Filter operator in line~\ref{fi-filter-1}, and $p(v') \neq v$. Thus, $v$ would not collect any $@cnt$ in line~\ref{fi-cnt-1}. As a result, it will survive line~\ref{fi-filter-2} and be assigned as $p(v) = \min(N^+_v)$.
    
    (Only If.) If $v$ is isolated, it would never survive the \Filter in line~\ref{fi-filter-2}. If $v$ is a non-isolated non-singleton, there are two cases:
    \begin{itemize}
        \item $\exists v' \in N^+_v$ $v' < v$: we must have $p(v) \neq v$, thus it can not survive the \Filter operator in line~\ref{fi-filter-2}.
        \item $\forall v' \in N^+_v$ $v < v'$, and $\forall v'' \in N^+_{v'}$ $v'' < v'$: in this case, $v$ must be the parent of all its neighbors, which will make it pick up the \code{@cnt} in line~\ref{fi-cnt-1}. Thus, it can not survive the \Filter operator in line~\ref{fi-filter-2}.
    \end{itemize}
\end{proof}

\begin{proposition}
\label{prop:star_detection_sound}
\code{StarDetection} in \reflist{flash_optimized_cc} is correct.
\end{proposition}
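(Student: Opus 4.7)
The plan is to prove that after executing \code{StarDetection}, the runtime property \code{@is\_star}[v] equals $\top$ if and only if $v$ lies in a star under the current parent structure, where a star is a rooted tree of depth at most 1 (i.e., either an isolated root or a root with leaves all pointing directly to it). Since \refprop{forest_init_sound} and the invariants of CC-opt ensure that the \code{@p} pointers form a forest, the tree $T(v)$ containing $v$ is well-defined. A convenient restatement of the specification is: $v$ is in a star iff every $u \in T(v)$ satisfies $p(p(u)) = p(u)$; I will call vertices violating this property \emph{bad}.

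For the ``star is preserved'' direction, suppose $v$ lies in a star. No vertex of $T(v)$ is bad, so the \Filter at line~\ref{sd-filter-1} eliminates every vertex of $T(v)$, making the subsequent \Local (line~\ref{sd-isstar-1}), \Push (line~\ref{sd-push-1}), and \Local (line~\ref{sd-isstar-2}) inert on $T(v)$. Moreover, no \Push originating outside $T(v)$ can reach $T(v)$, since a grandparent always lies in the same tree as the source of the \Push. Every vertex of $T(v)$ therefore retains \code{@is\_star} $=\top$ through the first chain, and the closing \Pull at line~\ref{sd-isstar-3} takes a $\min$ over values that are all $\top$, preserving $\top$.

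For the ``non-star is detected'' direction, suppose $T(v)$ has depth at least 2, and I would split by the depth $d$ of $v$ in $T(v)$. If $d \geq 2$, then $p(v)$ and $p(p(v))$ sit at depths $d-1$ and $d-2$ respectively, so $v$ is bad and is directly marked $\bot$ at line~\ref{sd-isstar-1}. If $v$ is the root of $T(v)$, the depth-$\geq 2$ hypothesis guarantees some $u \in T(v)$ at depth exactly 2; then $p(p(u)) = v$, so $u$ is bad and the \Push at line~\ref{sd-push-1} together with line~\ref{sd-isstar-2} assigns $v.$\code{@is\_star} $=\bot$. If $d = 1$, then $v$'s parent is the root $r$ of $T(v)$, which by the previous case already has \code{@is\_star} $=\bot$ after the first chain; the closing \Pull at line~\ref{sd-isstar-3} then drags $\bot$ into $v$ via the $\min$.

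The step I expect to require the most care is the last one: verifying that a single-level \Pull at the end suffices to propagate the $\bot$ marker across all of $T(v)$. This works because the case decomposition above shows that the propagation only ever needs to traverse one edge — vertices at depth $\geq 2$ are independently handled by the bad-vertex filter, and only depth-1 vertices still need to hear the bad news from the root, which is exactly one parent hop away. Combining the two directions closes the proposition.
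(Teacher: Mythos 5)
Your proof is correct, and it checks the same three mechanisms the paper checks --- the filter-and-mark step at line~\ref{sd-isstar-1}, the push-to-grandparent marking at lines~\ref{sd-push-1}--\ref{sd-isstar-2}, and the pull-from-parent propagation at line~\ref{sd-isstar-3} --- but your completeness direction travels a genuinely different route. The paper states a \emph{local}, per-vertex specification (a root condition and a leaf condition), argues preservation of \code{true} for star vertices essentially as you do, and then settles the converse by mapping the three code fragments onto Rule-1, Rule-2 and Rule-3 of \cite{Qin2014}, delegating to the cited work the reason why those rules mark exactly the non-star vertices. You instead use the \emph{global} characterization ($v$ is in a star iff its tree has depth at most one, iff the tree contains no vertex with $p(p(u)) \neq p(u)$) and give a self-contained, depth-stratified argument: vertices at depth $\geq 2$ are caught by the filter, the root is caught by a push from a depth-$2$ witness, and depth-$1$ vertices inherit the root's \code{false} through the single closing \Pull --- which also explains why one pull hop suffices. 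Your version buys self-containedness and, incidentally, a tighter specification: the paper's condition (2), read literally, does not force the whole tree to be a star (its ``If'' case tacitly assumes the parent satisfies condition (1)), whereas your tree-level statement is exactly what the algorithm guarantees; the paper's version buys brevity and explicit alignment with the published CC-opt rules. Two minor points you rely on and should state: the forest invariant on \code{@p} (which you cite from \refprop{forest_init_sound} and the CC-opt invariants --- acceptable, as the paper leans on \cite{Qin2014} in the same way), and the fact that the \Filter preceding line~\ref{sd-isstar-3} only lets still-true vertices pull, which is harmless since an already-false depth-$1$ vertex needs no update.
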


\begin{proof}
    After \code{StarDetection}, we have $v$ belong to a star (indicated as \code{v.@is\_star = true} in \reflist{flash_optimized_cc}) if and only if it satisfies that:
    \begin{enumerate}
        \item $p(v) = v$ and $\not\exists v'$ s.t. $p(p(v')) = v$ or,
        \item $p(v) \neq v$, $p(p(v)) = p(v)$, and $\not\exists v'$ s.t. $p(v') = v$.
    \end{enumerate}
    It is clear that $v$ is the root vertex of the star in the first case, and it is the leaf in the second case. Note that there are three locations in \reflist{flash_optimized_cc} that will alternate the value of \code{@is\_star}, namely line~\ref{sd-isstar-1}, line~\ref{sd-isstar-2} and line~\ref{sd-isstar-3}. In line~\ref{sd-isstar-3}, we specify an $\min$ aggregate function only for interface consistency, there requires no aggregation as each vertex will only have one parent.

(If.) In both cases, $v$ would not survive the \Filter in line~\ref{sd-filter-1}, thus its \code{@is\_star} will not be assigned to \code{false} in line~\ref{sd-isstar-1}; no vertex is able to arrive at $v$ through the \Push in line~\ref{sd-push-1}, thus it will surpass the assignment in line~\ref{sd-isstar-2}. If $v$ satisfies (1), it will arrive at line~\ref{sd-isstar-3}, but as $p(v) = v$, the \code{@is\_star} will be reassigned as the original \code{true} value. If $v$ satisfies (2), then from case (1) we know that $p(v)$ will remain \code{@is\_star = true}, thus line~\ref{sd-isstar-3} does not alternate its value.

(Only If.) To prove ``Only If'', we simply follow \cite{Qin2014} to apply the three rules sequentially. Firstly, line~\ref{sd-isstar-1} corresponds to Rule-1 that assigns \code{v.@is\_star = true} when $p(v) \neq p(p(v))$. Secondly, according to Rule-2, if $\exists v'$ s.t. $v'$ does not belong to a star and $p(p(v')) = v$, this $v'$ can reach $v$ via the \Push in line~\ref{sd-push-1}, and line~\ref{sd-isstar-2} is executed to exclude $v$ from the star. Finally, if $p(v)$ does not belong to a star, then line~\ref{sd-isstar-3} will propagate \code{@is\_star = false} from $p(v)$ to $v$ based on Rule-3.
\end{proof}

\begin{proposition}
\label{prop:star_hooking_sound}
\code{StarHooking} in \reflist{flash_optimized_cc} is correct.
\end{proposition}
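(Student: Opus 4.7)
The plan is to follow the template of Propositions~4.1 and~4.2: restate the intended post-condition of \code{StarHooking} based on the CC-opt algorithm~\cite{Qin2014, Shiloach1982}, then argue operator-by-operator that \reflist{flash_optimized_cc} realises it. After \emph{conditional} \code{StarHooking}, each star root $r$ should have its parent pointer updated to
\[ m_r := \min\{p(u) : (w,u)\in E,\ w \in \mathrm{star}(r)\} \]
provided $m_r < r$, and remain unchanged otherwise, with no non-root vertex having its parent altered. \emph{Unconditional} \code{StarHooking} performs the same assignment without the $m_r < r$ guard. The guard avoids the two-star hooking cycle of~\cite{Shiloach1982}, which is why the unconditional version is invoked only after a second \code{StarDetection}.

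I would then establish three successive invariants, one per statement preceding the \texttt{if}. First, after the initial \Pull, \code{v.@is\_p\_star} is true iff $p(v)$ is a star root: the lambda reads \code{v.@is\_star \&\& v == v.@p} at $p(v)$, and by \refprop{star_detection_sound} the \code{@is\_star} field marks star membership while \code{v == v.@p} tests that the parent self-loops. The subsequent \Filter therefore retains exactly every star root, together with every leaf of a non-trivial star, and these vertices' outgoing edges collectively cover every cross edge of each star. Second, the next \Pull writes \code{v.@np} to $\min_{u\in N^+_v} p(u)$, so every surviving vertex $v$ carries the smallest component label reachable along one of its own edges. Third, the \Push along \code{@p} with the $\min$ aggregator delivers to every star root $r$ the desired value $m_r$, by commutativity and associativity of $\min$ combined with the barrier semantics of \Push from \refsec{abstraction_flash_operators}; moreover, the output \vset of this \Push is exactly the set of star roots.

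The two branches are then dispatched easily. In the conditional branch the \Filter \code{@new\_p < @p} keeps only roots with $m_r < r$ and the \Local commits $p(r)\leftarrow m_r$; leaves in $A$ are not in the output \vset of the \Push, so their parent pointers are untouched, matching the specification. The unconditional branch folds the assignment into the \Push aggregator itself, writing only to roots and without the guard. The main obstacle I anticipate is the third invariant, specifically showing that every cross edge of every star is actually visited: a singleton star (root with no leaves) must carry its own cross edges, which works because the root self-loops and thereby survives the first \Filter via the first invariant. Establishing this structural lemma also yields the dual claim that the \Push writes only to roots, after which correctness matches the specification term by term.
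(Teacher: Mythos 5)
Your proposal is correct, but it takes a genuinely different route from the paper. You prove a \emph{functional} post-condition operator by operator: after the first \Pull and \Filter exactly the star vertices survive (so every cross edge of every star, including those of singleton stars, is covered), the second \Pull computes the per-vertex minimum neighbouring parent, and the \Push with $\min$-aggregation delivers $m_r$ to each star root $r$, with only roots appearing in the output \vset and hence only roots having \code{@p} rewritten; the two branches then differ only in the $m_r < r$ guard. The paper instead fixes a weaker, relational specification --- after \code{StarHooking} no edge of $E$ joins two vertices that both still belong to a star --- and proves it by contradiction on a hypothetical offending edge $(v,v')$ with $p(v) < p(v')$: both endpoints survive into $A$, $v'$ pulls some $x \leq p(v)$, the $\min$-aggregated \Push therefore rewrites $p(p(v'))$ to $x < p(v')$, destroying $v'$'s star; the unconditional branch is not argued separately. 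Your version buys a sharper characterization of exactly which pointers change and to what value, handles both branches, and (with one extra line, essentially the paper's own $x \leq p(v) < p(v')$ chain applied to the root of $v'$) implies the paper's guarantee; the paper's version buys brevity and directly targets the property that the surrounding CC-opt argument in \cite{Qin2014} consumes. If you want your write-up to slot into the proof of the overall theorem as the paper's does, you should add that final step deriving the ``no edge between two star vertices'' consequence from your functional specification.
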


\begin{proof}
    It is guaranteed that after \code{StarHooking}, there does not exist $v$ and $v'$ such that they both belong to a star and there is an edge $e \in E$ with $v = e[0]$ and $v' = e[1]$. Suppose it is conditional hooking, and there is such an edge connecting two star vertices $v$ and $v'$. Without loss of generality, we have $p(v) < p(v')$. Then we must have $v \in A$ and $v' \in A$, and for $v'$, it must have updated its \code{@new\_p} as $x \leq p(v)$. As $x \leq p(v) < p(v') $, the algorithm will update $p(p(v'))$ as $x$ in line~\ref{sh-assign-new}, which makes $v'$ no longer belong to a star. As this applies to every such pair, the guarantee holds.
\end{proof}

\begin{theorem}
The \flash implementation of CC-opt in \reflist{flash_optimized_cc} is correct.
\end{theorem}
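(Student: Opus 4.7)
The plan is to stitch together the three sub-procedure propositions already established, add a brief verification of \code{PointerJumping}, and then lift these per-component guarantees into a loop invariant argument for the outer \code{do-while}. By Propositions~\ref{prop:forest_init_sound}, \ref{prop:star_detection_sound} and \ref{prop:star_hooking_sound}, each of \code{ForestInit}, \code{StarDetection} and \code{StarHooking} faithfully implements the corresponding primitive of CC-opt in \cite{Qin2014}. I would first observe that lines~\ref{pointerjumping-begin}--\ref{pointerjumping-end} implement \code{PointerJumping}: the \Pull along @p retrieves $p(p(v))$ into @new_p, and the \Filter at line~\ref{pj-filter} lets only those vertices whose grandparent differs from their parent perform the assignment $p(v)\leftarrow p(p(v))$. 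Since the skipped vertices already satisfy $p(v)=p(p(v))$, the effect is exactly the intended pointer jump.

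Next I would introduce the loop invariant that captures soundness: at the start of every iteration of the outer loop, the parent pointers @p induce a forest $F$ on $V$ such that (i) two vertices lie in the same tree of $F$ iff they lie in the same connected component of $G$, and (ii) the root of each tree is the minimum vertex id that has been reached via any prior hook. Proposition~\ref{prop:forest_init_sound} establishes the base case, since \code{ForestInit} hooks each vertex only to its neighbor of smallest id. For the inductive step, I would argue that each operation preserves invariant~(i): \code{StarDetection} only touches the auxiliary flag @is_star; \code{StarHooking} redirects the root of a star to $p(v')$ for some $v'$ adjacent in $G$ to some $v$ of the star, so the merged tree still consists of vertices from a single connected component; and \code{PointerJumping} replaces $p(v)$ by $p(p(v))$, which leaves the tree-root of each $v$ unchanged. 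Invariant~(ii) is preserved because the conditional \code{StarHooking} writes \code{min} of neighbors' parents and the unconditional pass only fires after Propositions~\ref{prop:star_detection_sound} and \ref{prop:star_hooking_sound} guarantee the required shape.

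Finally, I would establish termination and output correctness. The loop exits when the \Filter in line~\ref{pj-filter} empties $A$, which means $p(v)=p(p(v))$ for every $v$, so every tree in $F$ has depth at most one and is therefore a star. Proposition~\ref{prop:star_hooking_sound} (applied to the last unconditional \code{StarHooking}) then implies no edge of $G$ joins two distinct stars, so each star is a maximal connected component. Combined with invariant~(i), the forest at termination partitions $V$ exactly according to the connected components of $G$, and @p labels each vertex with its component representative, which is the claim.

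The main obstacle will be the inductive step, specifically the interaction between the conditional \code{StarHooking}, the second \code{StarDetection}, and the unconditional \code{StarHooking}: one must verify that no hook merges two trees whose vertices are not actually connected in $G$, and that the pair of passes makes strict progress whenever $A$ is non-empty. Strict progress (needed to rule out stalling loops and to match the $O(\log |V|)$ bound of \cite{Qin2014}) can be argued by noting that a non-vacuous \code{StarHooking} strictly decreases the number of distinct roots, while a vacuous one leaves \code{PointerJumping} as the only source of change and that operation strictly reduces the sum of tree depths. Combined with the finiteness of $|V|$, this yields termination and closes the proof.
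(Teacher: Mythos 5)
Your decomposition matches the paper's at the component level: both rest on \refprop{forest_init_sound}, \refprop{star_detection_sound} and \refprop{star_hooking_sound}, plus the easy observation that lines~\ref{pointerjumping-begin}--\ref{pointerjumping-end} realize \code{PointerJumping}. Where you diverge is at the outer loop: the paper stops there, arguing that since each \flash component faithfully implements the corresponding primitive of CC-opt, the whole program simulates the algorithm of \cite{Qin2014}, whose convergence to isolated stars (hence termination via the \Filter at line~\ref{pj-filter}) and correctness are taken from that reference. You instead re-derive the algorithm-level correctness from scratch, via a loop invariant on the forest induced by \code{@p} and a strict-progress argument for termination. That is a legitimately more self-contained route, but it commits you to re-proving the Shiloach--Vishkin-style facts (hooking never creates cycles, in particular that two stars cannot hook into each other during the unconditional pass; pointer jumping strictly reduces depth only when some tree has depth at least two; non-vacuous hooking reduces the number of roots) that the paper deliberately outsources to \cite{Qin2014}. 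As sketched, these are asserted rather than proved, and they are exactly the delicate part of the original algorithm's analysis.

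One concrete misstatement you should fix: your invariant~(i) is stated as an ``iff,'' i.e.\ two vertices share a tree \emph{iff} they share a connected component. That fails already at the base case, since after \code{ForestInit} a single component is generally split into many small trees, so \refprop{forest_init_sound} cannot establish it. The invariant you actually use and preserve is only the soundness direction (same tree $\Rightarrow$ same component); the converse is not an invariant at all but a property of the terminal state, recovered--as your last paragraph in fact does--from the guarantee of \refprop{star_hooking_sound} that no edge joins two distinct stars, which forces each component to collapse into a single star. Restate the invariant one-directionally and drop your invariant~(ii) (the claim about roots being minima ``reached via any prior hook''), which is both vague and unnecessary; with those repairs, and with the progress claims either proved or explicitly cited from \cite{Qin2014}, your argument goes through.
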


\begin{proof}
\code{PointerJumping} is simply used to assign $p(v) = p(p(v))$, which is obviously correct following the \flash codes in line~\ref{pointerjumping-begin}-\ref{pointerjumping-end}. Furthermore, the algorithm leaves a sequence of stars at the end. Thus, line~\ref{pj-filter} will eventually filter out all vertices and the algorithm terminates in line~\ref{terminate}. According to \refprop{forest_init_sound}, \refprop{star_detection_sound} and \refprop{star_hooking_sound}, we complete the proof.
\end{proof}

\begin{remark}
\label{rem:flash_cc_opt} 
We have shown that we can correctly implement the complex CC-opt algorithm using \flash, which benefits from the use of implicit edges as data delivery channels (e.g. line~11 and line~32 that \Push via \code{@p} and \code{@grandp}). The implementation looks lengthy at the first glance. The fact is the \flash codes are just 29 lines longer than the pseudo codes presented in \cite{Qin2014}. The authors kindly offer the codes using MapReduce, which contains more than 1000 lines. 
We also implement CC-opt using Green-Marl, but the code is almost triple times longer than that of \flash. Additionally, an implementation of CC-opt provided by TigerGraph's lead technician contains 195 lines of code. 
\end{remark}

Obviously, the capability of \flash does not constrain to programming better CC algorithm. We have so far used \flash to program over 50 well-optimized algorithms for analytical queries, including the iteration-optimized minimum spanning tree algorithm \cite{Qin2014}, triangle listing \cite{Danisch2018}, $k$-core decomposition \cite{Khaouid2015} and butterfly counting \cite{Wang2019}, to just name a few. These codes can be found in the codebase of \flash \cite{flash-code}.

\section{Implementation Considerations and Future Work}
\label{sec:flash_impl}
We discuss some implementation considerations while further developing \flash in the future, which mainly cover the maintenance of graph data and \flash's implementation in distributed context.

\comment{
We first introduce the current implementation of \flash based on code generation. Then we discuss some implementation considerations in the future work, which mainly includes the maintenance of graph data and the distributed implementation.

\subsection{Code Generation}\label{sec:code_generation}
We implement \flash using c++ code generation {\color{red} todo: include the github link}. Only minimal compiling is used to translate the lambda functions into C++ codes, which hence does not include any optimization techniques. We allow the user to specify the graph schema (static properties) prior to the execution of \flash, and then serialize the graph data into binary format according to the graph schema to serve as input of the \flash program. The runtime properties in currently maintained in a concurrency vector \code{runtime\_prop}, which must be declared in advance before they are used in the \flash codes.

As we now borrow C++'s control flow for \flash, we do not need to consider them in the code generation.  We briefly introduce how to generate codes for the \flash operators using \Filter, \Local and \Push as examples. We treat both input and output \vsets of a \flash operator as a vector of vertex IDs, and we have access to the graph data via the \code{graph} variable. For \Filter operator, we maintain a bitmap to supervise the on/off vertices, which will be updated after each call of \Filter. A \Local operation \code{V.Local(func)}, can be generated into the following codes:
\begin{lstlisting}[numbers=none]
for (ID v : graph.V()) {
    func(v);  // note that f will be compiled into a C++ function.
}
\end{lstlisting}
As for a \Push operation like \code{A = V.Push($\placeholder$.outE(|e| e.dst.@foo = agg\_fn($\placeholder$.bar)))}, we generate the following codes:
\begin{lstlisting}[numbers=none]
for (ID v : graph.V()) {
    for (ID e : graph.outE(v)) {
        ID dst = graph.get(e, "dst");
        // Update the bitmap for output
        bits.set(dst, true);
        // Assume @foo has int type
        int _foo = runtime_prop.get(dst, "@foo");
        int _bar = runtime_prop.get(v, "bar");
        runtime_prop.set(dst, agg_fn(_foo, _bar));
    }
}
Vector<ID> A = _filter(&V, [](ID v) bit.get(v));
\end{lstlisting}

We leverage the OpenMP library to implement basic parallelism, and the idea is to add the \code{\#pragma omp parallel} to the \code{for}-loop in the above generated codes. However, it may be too costly to do so for each operator. Thus, we merge the generation of several operators until an execution barrier is confronted (\Push, \Pull and \Group) and use one parallel setting for all of them. For example, the above \Local and \Push operators, if chained together, would be generated as:
\begin{lstlisting}[numbers=none]
#pragma omp parallel
for (ID v: graph.V()) {
   // The Local part
   func(v); 
   // The Push part
   for (ID e : graph.outE(v)) {
         ...
   }
}
\end{lstlisting}
}

\subsection{Maintenance of Graph Data} 
\label{sec:maintenance_of_graph_data}
The graph data consists of the structure data and property data. Given the programming structure of \flash, we will maintain the structure using the widely-used adjacency list. For property data, \flash explicitly differs runtime property from static property to guide implementation in the following. Firstly, the runtime properties are often more frequently accessed than the static ones, and they are invalidated after the execution. As a result, we may want to maintain the runtime properties in the memory, and prioritize their cache residency when designing replacement strategy. While static properties can be maintained in a persistent store, such as a traditional DBMS. Secondly, when maintained in the database store, the data types of the static properties are pre-claimed along with the schema of the database. For the runtime properties, the data types can either be claimed by the user as most strong-typed programming languages (e.g. C/C++, Java), or be automatically inferred from the context, as most script languages (e.g. Python, Perl). 


\comment{
\stitle{Structure Data.} Both vertices and edges can be identified via globally unique ids, which serve as a light-weighted references of the corresponding graph entities.  As required by the \Push and \Pull operators, each vertex will frequently access its adjacent vertices and edges. Thus, the graph structure should be better maintained in the form of adjacency list.

\stitle{Property Data.} \flash explicitly differs runtime property from static property, which can guide implementation in the following perspectives. Firstly, the access pattern. The runtime properties are often more frequently accessed than the static ones, and they are invalidated after the execution. Thus, for runtime properties, one may consider maintaining them in a fast and non-persistent store, and prioritizing their residency in the cache when designing replacement strategy; for static properties, a persistent store with cache such as a database store is more appropriate. Secondly, the access permission. The runtime properties require write permission while the static ones are read-only for now\footnote{We will add the mutability of static data into the functionalities of \flash in the future, but in terms of a general graph query, it is still treated as read-only data.}. Therefore, consistency control may be required on runtime properties. Thirdly, the data types. When maintained in the database store, the data types of the static properties are pre-claimed along with the schema of the database. For the runtime properties, the data types can either be claimed by the user as most strong-typed programming languages (e.g. C/C++, Java), or be automatically inferred from the context, as most script languages (e.g. Python, Perl). The latter is obviously preferred, which is also assumed in this paper.
}

\subsection{Distributed Implementation} 
\label{sec:distributed_implementation}
As graph now easily grows to web-scale, it is important to discuss big graph processing in the distributed contexts. Note that \flash follows a breadth-first traversal design, which can be easily parallelized/distributed following the vertex-centric systems such as PowerGraph \cite{Gonzalez2012} and Pregel \cite{Malewicz2010}. However, there are some interesting issues specific to \flash, and will be discussed in the following.

\comment{
We first introduce a toy prototype before discussing advanced issues such as the granularity of scheduling and partitioning. 

For the prototype, we first assume the graph data has been partitioned, such that each partition holds a disjoint subset of vertices, and each vertex can access all adjacency edges and all its static properties in the partition. We further assume no edge property is placed. The prototype follows the vertex-centric-based systems such as Pregel \cite{Malewicz2010} and PowerGraph \cite{Gonzalez2012} to configure an on/off transition on each vertex routine to control whether it is active for computation. In addition, we adopt the master-slave shared-nothing architecture as most modern engines. A vertex routine can be turned on either by the commanding signals or the arrival of data, and it can turn itself off when it completes its job. For now, we assume that the program terminates when all vertex routines are in the ``off'' status. To allows data communication, the prototype configures a two-way communication channel between each pair of vertex routines. This is necessary as a \flash program may \Push (resp. \Pull) data to (resp. from) arbitrary vertices because of the implicit edges. We brief the three primitive operators in the prototype.

\begin{itemize}
    \item \Filter: The functionality of \Filter is twofold. Firstly, it is used to turn on certain vertex routines at the beginning of the program. In this case, the master signals each slave to process a full scan of the partitioned vertices and keep active the ones that satisfy the filter condition. Secondly, it is used to turn off certain vertices during the execution. In this case, each ``on'' vertex simply turns itself off while finding itself not satisfying the filter condition.
    \item \Local: The \Local computation can be independently scheduled on each ``on'' vertex. It is trivial to fetch the static properties as they are maintained locally. As for runtime properties, each executor can reserve some memory space for them using hashmap-like data structure.  
    \item \Push: The implementation of \Push can be split into three steps. Firstly, each vertex routine places the data to send in the corresponding channels. As \Push adopts an associative and commutative aggregate function, a pre-aggregation can be applied here for all data that will arrive at the same place. Secondly, the system schedules the delivery of data. Thirdly, each vertex routine receives data from its incoming channel, aggregates the data and assigns the runtime property. A barrier is placed by the master for the execution of \Push, which will not be released until all affected vertices complete the computation. In the case that no data exchange is defined in \Push, a signal data will be delivered instead to trigger the follow-up execution.
\end{itemize}

The simple prototype may be far from an actual implementation, while it helps spark some interesting issues as will be discussed in the following.  
}

\stitle{Granularity of Scheduling.} The granularity of a task can be simply comprehended as the length of its execution time. To schedule the program based on a proper granularity of the task can significantly impact the efficiency and scalability of a parallel program \cite{Peter2013}. On the one hand, it is more effective to schedule fine-grained (short running time) tasks, but the extra overhead (from initialization and launching) can be high; on the other hand, a coarse-grained task introduces less overhead, but it may not be effective to schedule. For \flash, we discuss the granularity in two extremes. In the fine-grained extreme, the program is scheduled at the operator level, which means that each operator is scheduled via its own process. In the coarse-grained extreme, the program can be scheduled at the barrier level, which means that multiple operators can be co-scheduled as a composite task\cite{Zaharia2010} until confronting a barrier (e.g. \Push, \Pull and \Group). The operator-level granularity is more flexible to schedule than the barrier-level alternative, which can improve the efficiency of parallel processing. For example, the user may write the following \flash codes
\begin{lstlisting}[numbers=none]
A1 = V.Filter(cond1);
A1.Process1_with_barrier(...);
...
A10 = V.Filter(cond10);
A10.Process10_with_barrier(...);
\end{lstlisting}
Suppose the 10 filters have divided the program into 10 sub-tasks, and there exists no conflict in modifying the runtime properties among them. The barrier-level scheduling may schedule the 10 sub-tasks sequentially. In this case, one exceptionally long sub-task (a.k.a. load skew) will delay the whole program. For the operator-level scheduling, the long-running task may contain multiple operators that can be independently scheduled, which helps reduce load skew in parallel processing. Nevertheless, it may not be wise to always schedule at the operator level. One just consider the following simple example
\begin{lstlisting}[numbers=none]
A = V.Local(...).Local(...)....Local(...)
\end{lstlisting}
In this case, the chaining of multiple \Local operators can obviously be co-scheduled. A proper scheduling granularity of a \flash program may vary case by case, but one can refer to the design of Spark \cite{Zaharia2010} for some general rules. Adaptive granularity \cite{Aharoni1993, Peter2013} may also be taken in to consideration.

\begin{remark}
As the most representative vertex-centric system, Pregel \cite{ Malewicz2010} and a lot of its variants\cite{giraph, Khayyat2013, Salihoglu2013, Yan2015} follow the coarse-grained scheduling, where each worker (process) schedules all vertex programs of its partition in each iteration (potentially with barrier). Such scheme simplifies the system design, while it easily leads to load skew that requires a lot of efforts to address \cite{Salihoglu2014}. Furthermore, it can reduce the flexibility of programming. In the above example of 10 sub-tasks, it can be awkward for the vertex-centric models to implement all 10 sub-tasks in a single program. While implemented separately, it requires 10 times of compiling and tuning, and it misses the opportunity of overlapping these sub-tasks for more efficient parallel execution. This demonstrates another advantage of \flash over the vertex-centric model.
\end{remark}

\stitle{Graph Partitioning.} Graph partitioning is widely studied in the literature \cite{Andreev2006, Ding2011, Hendrickson2000}. In general, it can be categorized into \emph{Edge-Cut} strategy and \emph{Vertex-Cut} strategy. The Edge-Cut strategy partitions the vertices (along with their properties) into $w$ ($w$ is the number of machines) disjoint sets, and each edge is maintained in both partitions of the source and destination vertices, which is relatively straightforward, and we do not further discuss it. The Vertex-Cut strategy, on the other end, conducts the partitioning on the edges first, and places both end vertices of an edge in the same partition. The advantage of the Vertex-Cut strategy is that it can mitigate skew caused by super vertices (vertices of extremely large degree). However, a vertex will be placed (cut) in multiple partitions thereafter it is associated with many edges. When the vertex carries properties, it may be costly to maintain multiple copies of them. As \cite{Gonzalez2012}, one can treat one of these vertices as the \emph{master} and the others as \emph{ghosts}, and let only the master vertex maintain the properties and pass over to the ghosts while needed. Caching some frequently-used properties at the ghosts can help reduce data communication. As for the runtime properties, when they are created by \Local operator, they should be created and maintained by all vertex copies. When they are created by the \Push (\Pull) operator, they should first be aggregated by the master vertex and then propagate to all ghosts. Note that this process must be synchronized at the barrier of \Push (\Pull). As the Vertex-Cut strategy can introduce more extra cost, \flash tends to favor the Edge-Cut strategy. A hybrid strategy can be considered to ``Edge-Cut'' most vertices and only ``Vertex-cut'' those super vertices.

\stitle{Distributed Vertex Indexing.} Unlike traditional vertex-centric systems \cite{Gonzalez2012, Malewicz2010} that only communicate with the adjacent edges/vertices, \flash allows the user to specify implicit edges in \Push and \Pull operators for data exchanging. As a result, every slave machines must know the locations of all vertices. It can be costly to index such data in each machine. A straightforward solution is to use the vertex ID to encode the partition information \cite{Malewicz2010}, e.g. configure a global partition function like \code{hash(ID) = ID \% \# machines}. This solution has two drawbacks, (1) re-partition is required when the cluster is renovated; (2) it only works for Edge-Cut strategy. It is thus critical to explore more advanced techniques to index  the vertices for \flash.

\comment{
\stitle{Other Considerations.} We discuss the other typical issues in distributed processing. Firstly, fault tolerance. Replication and check-pointing are two common techniques used to recover from failure. One can backup the graph data and the static properties\footnote{We may rely on existing persistent database to simplify the failure recovery}, while check-pointing the runtime variables such as runtime properties, vertices and loop contexts. Secondly, distributed indexing. One may consider indexing certain properties using a distributed index such as \cite{Aguilera2008} for efficient lookup. Unlike classic vertex-centric systems \cite{Gonzalez2012, Malewicz2010} that only communicate with the adjacent edges/vertices, \flash allows the user to specify arbitrary vertices (as implicit edges) in \Push and \Pull operators to exchange the data. As a result, every slave machines must know the locations of all vertices. It can be costly to index such data in each machine. A simple solution is to simply use the vertex ID to encode the partition information \cite{Malewicz2010}, e.g. configure a global partition function like \code{hash(ID) = ID \% \# machines}. This solution has two drawbacks, (1) re-partition is required when the cluster is renovated; (2) it only works for Edge-Cut strategy. It is thus critical to explore more advanced techniques to index the locations of the vertices in \flash implementation.
}

\section{Experiments}
\label{sec:exp}
We demonstrate the experiment results in this section. All experiments are conducted in Debian Linux system on a server of 2 Intel(R) Xeon(R) CPU E5-2698 v4 @ 2.20GHz (each has 20 cores 40 threads) and 512GB memory. We use 32 threads in the experiment by default if not otherwise specified. For \flash, we generate the C++ codes with OpenMP for parallelism. Green-Marl is publicly available in the code repository \cite{green-marl}, which also adopts code generation. Green-Marl introduced many optimizations \cite{Hong2012} in generating efficient C++ codes, while we only implement basic code generation for \flash. 

\stitle{Datasets.} We use two real-life big graphs 
Twitter-2010 (TW) \cite{data-tw} and Friendster (FS) \cite{data-fs} with different characteristics for testing. The two graphs are preprocessed into undirected graphs by default, except when they are used in certain query that runs on directed graphs. Their statistics are given in the following:
\begin{itemize}
    \item TW is the Twitter following network that contains 41 million vertices and 1468 million edges. The average degree is 71 and the maximum degree is 3,081,112.
    \item FS is an online gaming social network that contains 65 million vertices and 1806 million edges. The average degree is 55 and the maximum degree is 5,214.
\end{itemize}

\stitle{Queries and Implementations.} We compare the following queries in the experiments: connected component (CC), pagerank (PR), sinlge-source shortest path (SSSP), strongly connected component (SCC, \textbf{directed}), core decomposition (CD), butterfly counting (BC) and triangle listing (TL). These queries cover the graph applications of Path (SSSP), Pattern (TC, BC), Community (CC, SCC) and Centrality (PR, CD).  
We benchmark all cases in the \textbf{parallel} context, where all implementations utilize the OpenMP library to parallelize. The queries and their algorithms are listed in \reftable{analytical_queries}.

For CC, PR, SSSP and SCC, we adopt the basic BFS-style algorithms based on label propagation, and we implement all of them using both native C++ and \flash. We use the Green-Marl's implementations of PR and SSSP \cite{green-marl}, and implement CC and SCC ourselves. Green-Marl has included a version of SCC using Kosaraju algorithm, while we do not adopt this version as it is a sequential algorithm, and our implementation is faster than it while using 32 threads. For all our implementations of C++ and Green-Marl, we carefully tune the codes to achieve the best-possible performance. We further benchmark two variants of CC implementation of \flash, namely CC-pull (\reflist{cc-pull}), and CC-opt (\reflist{flash_optimized_cc}). For CD, BC and TL, we adopt their state-of-the-art algorithms in the literature (\reftable{analytical_queries}), and use their C/C++ codes kindly provided by the authors. Note that \cite{Khaouid2015} was originally proposed for $k$-clique and we let $k = 3$ for triangle listing. We also implement all these algorithms using \flash. 


GSQL only runs in TigerGraph, and it is not fair to compare its performance with the other languages that are compiled into native runtime. Besides, it requires complex system tuning to make TigerGraph perform to the uttermost. In fact, we have obtained much worst performance of GSQL than the other competitors. Thus, we only evaluate the expressiveness of GSQL but do not benchmark its performance in the experiments. The codes of CC, PR, SSSP and SCC of GSQL can be found in the official codebase \cite{gsql-codes}.

\stitle{Expressiveness.} The LLoCs are also listed in \reftable{analytical_queries} for all implemented cases. In addition to the general rules of counting LLoCs, we also consider: (1) comments, output expressions, and data structure (e.g. graph) definitions are excluded; (2) a physical line is further broken into multiple logical lines while confronting ";", "\&\&", "||" and "."; (3) the open bracket is not counted as a new line, but the close bracket is counted. Note that we only count LLoCs in the core functions. It is obvious that \flash uses shorter coding than Green-Marl, GSQL and C++ in every evaluated case. Green-Marl is more concise than C++ codes as expected, while it is also shorter than GSQL (notably so in SCC). As a whole, \flash achieves the best expressiveness while expressing both basic and advanced algorithms. We provide more examples in \flash's codebase \cite{flash-code} for the readers to further judge the succinctness and readability of \flash codes.

\begin{table}[]
\small
    \centering
    \caption{The analytical queries and LLoCs of their implementations.}
    \label{tab:analytical_queries}
    \begin{tabular}{|c|c|c|c|c|c|}
        \hline
        Query & Algorithm & \multicolumn{4}{c|}{LLoCs}\\
        \hline
         & & Green-Marl & GSQL & C++ & \flash  \\
         \hline\hline
         CC & BFS & 26 & 26 & 33 & 11 \\
         \hline
         PR & BFS & 16 & 29 & 24 & 12 \\
         \hline
         SSSP & BFS & 21 & 25 & 39 & 11 \\
         \hline
         SCC & BFS \cite{Slota2014} & 58 & 133 & 98 & 21 \\
         \hline
         \hline
         CC-pull & \reflist{cc-pull} & - & - & - & 16 \\
         \hline
         CC-opt & \reflist{flash_optimized_cc} & 170 & 195 & - & 59 \\
         \hline
         \hline
         CD & \cite{Khaouid2015} & - & - & 44 & 28 \\
         \hline
         BC & \cite{Wang2019} & - & - & 100 & 29 \\
         \hline
         TL & \cite{Danisch2018} & - & - & 120 & 13 \\
         \hline
    \end{tabular}
\end{table}

\begin{figure}[htb]
    \centering
    \begin{subfigure}[b]{0.9\linewidth}
    \centering
        \includegraphics[height = 0.2in]{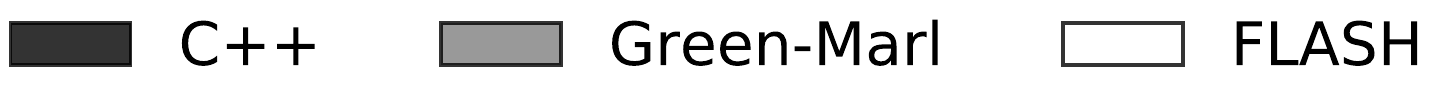}
    \end{subfigure}%
    \\
    \begin{subfigure}[b]{0.45\linewidth}
        \includegraphics[height=2.1in]{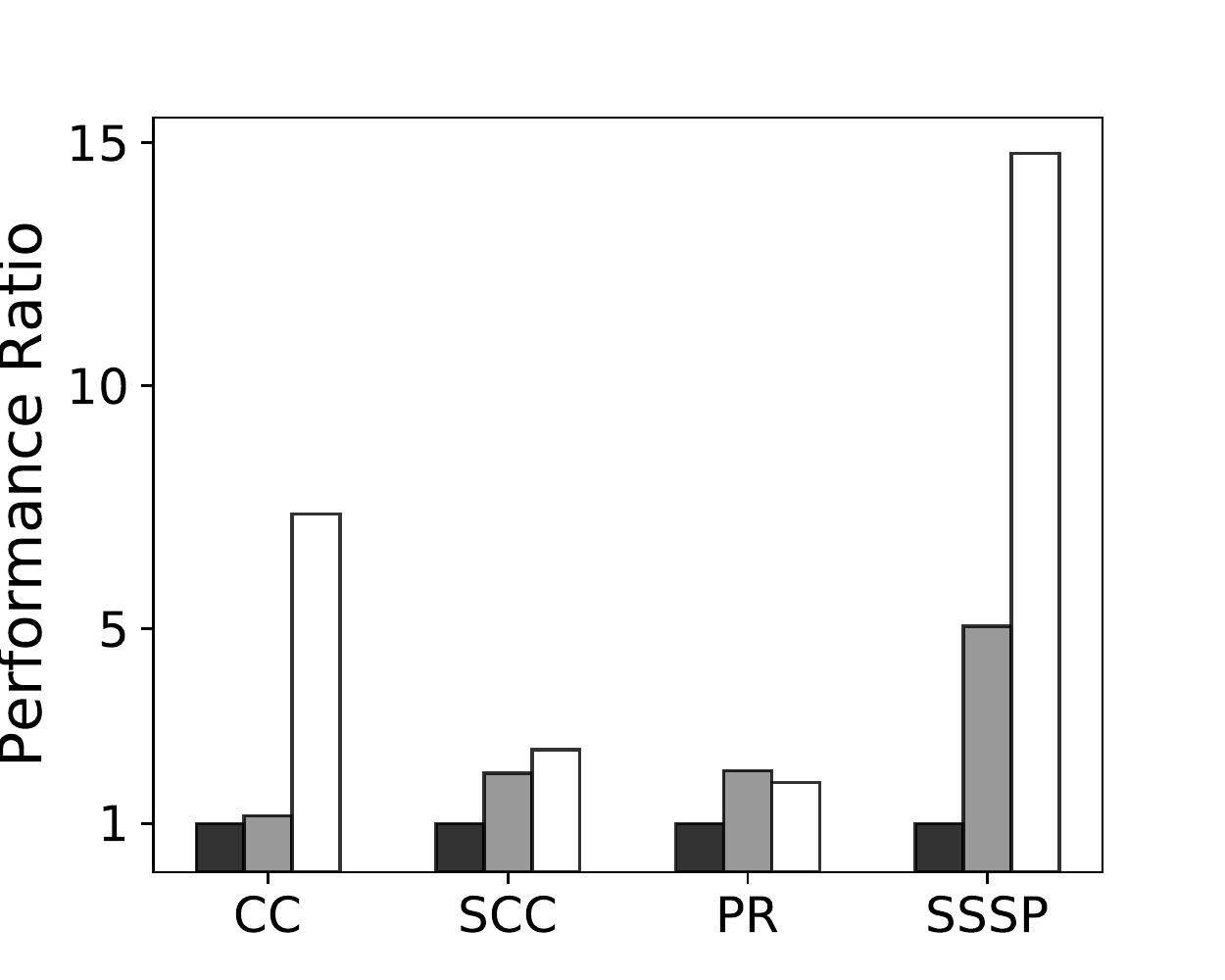}
        \caption{TW}
        \label{fig:exp1-twitter}
    \end{subfigure}%
    ~
    \begin{subfigure}[b]{0.45\linewidth}
        \includegraphics[height=2.1in]{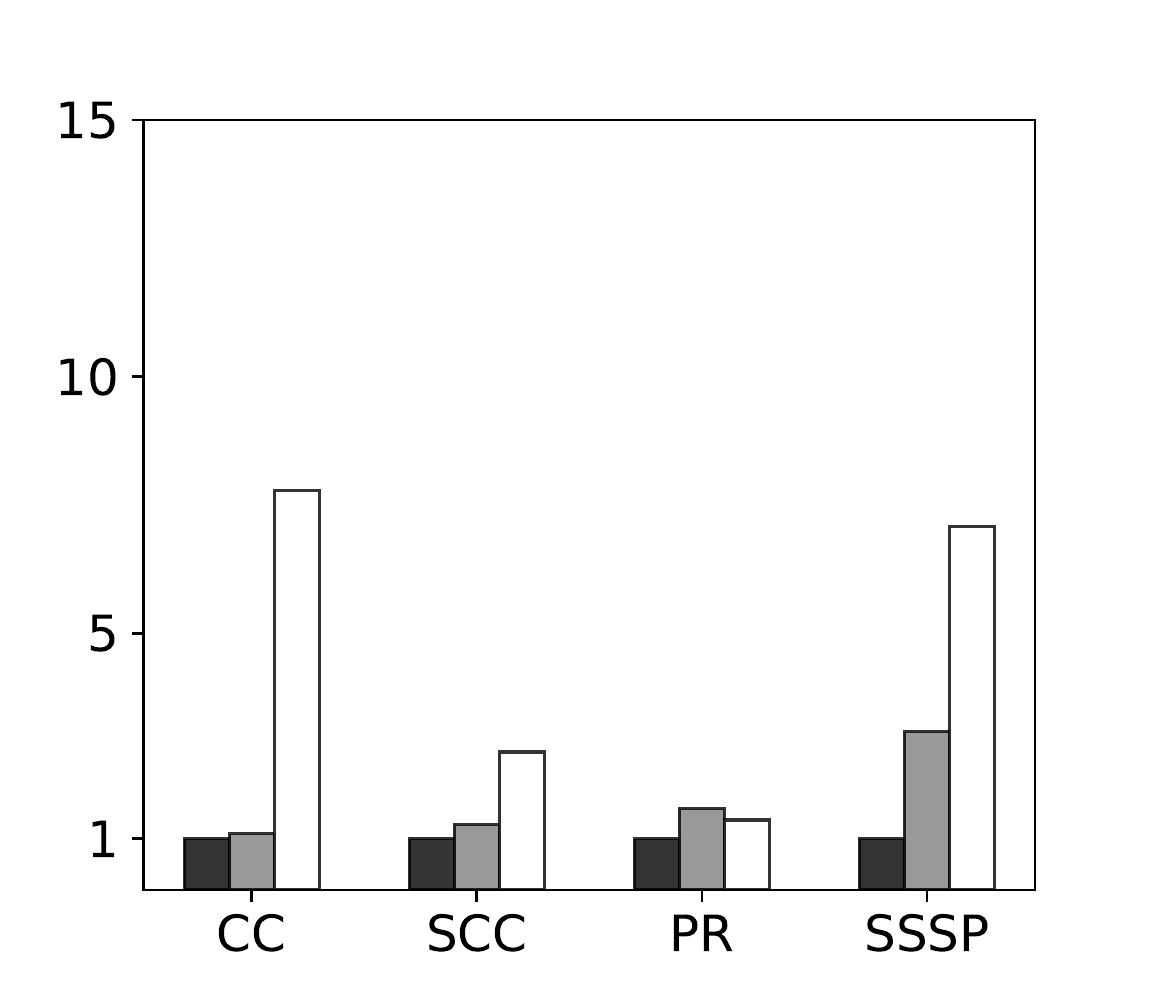}
        \caption{FS}
        \label{fig:exp1-friendster}
    \end{subfigure}%
    \caption{Performance comparison of CC, SCC, PR and SSSP using C++, Green-Marl and \flash.}
    \label{fig:varying-languages}
\end{figure}

\stitle{Exp-1: Basic Implementations.} We compare the performance of CC, SCC, PR and SSSP with their basic implementations while coding with C++, Green-Marl and \flash. We use the execution time of C++ as a benchmark, and record the performance ratio of Green-Marl and \flash relative to C++. Note that Green-Marl has introduced a lot of optimizations to generate efficient C++ codes \cite{Hong2012}, while \flash only adopts minimum effort. It is hence expected that Green-Marl has better runtime than \flash in most cases. The largest performance gap happens in the case of CC on both datasets, where \flash is roughly 6 times slower than Green-Marl. Compared to the other evaluated queries, CC tends to have more optimizing perspectives \cite{Jain2017}, thus Green-Marl may benefit more from the optimizations. We also want to point out that \flash is even slightly better than Green-Marl in the PR queries.

\begin{figure}[htb]
   \centering
    \begin{subfigure}[b]{0.84\linewidth}
    \centering
        \includegraphics[height = 0.2in]{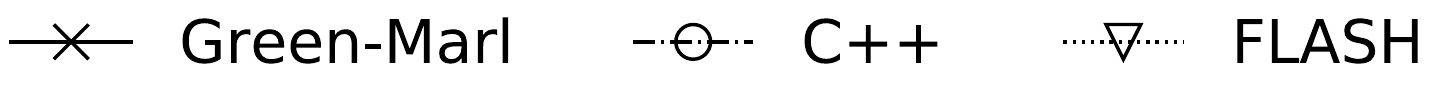}
    \end{subfigure}%
    \\
    \begin{subfigure}[b]{0.48\linewidth}
        \includegraphics[height=2.1in]{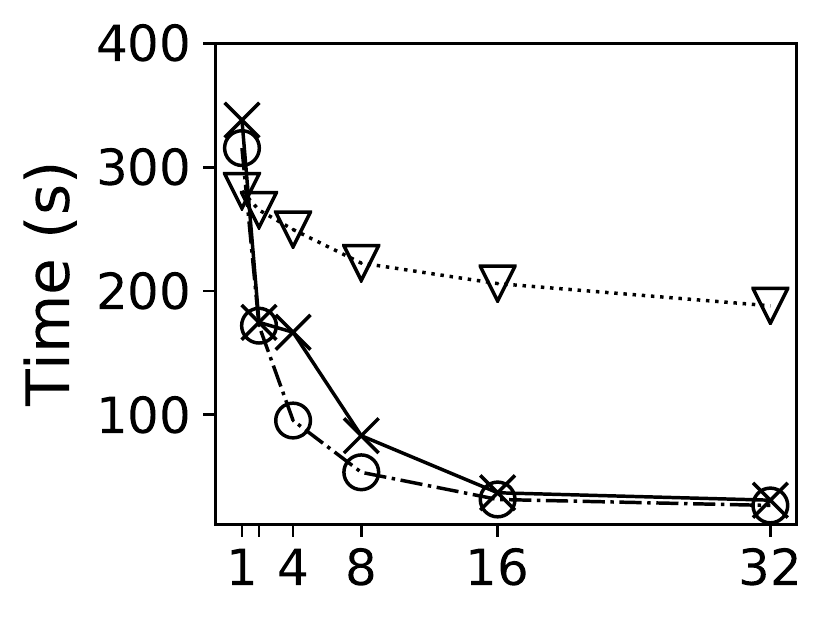}
        \caption{CC}
        \label{fig:exp2-cc-twitter}
    \end{subfigure}%
    ~
    \begin{subfigure}[b]{0.4\linewidth}
        \includegraphics[height=2.1in]{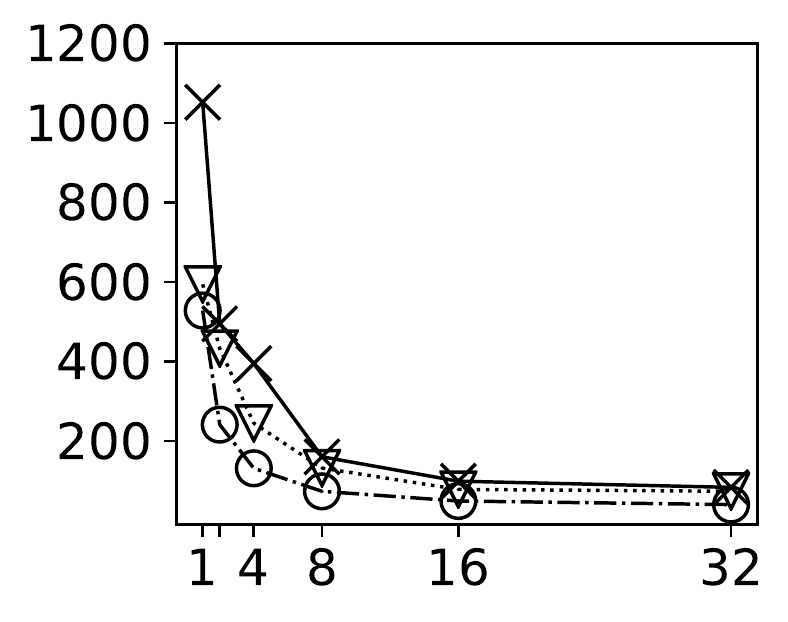}
        \caption{PR}
        \label{fig:exp2-pr-twitter}
    \end{subfigure}%
    \caption{Performance comparisons of CC and PR of C++, Green-Marl and \flash while varying threads.}
    \label{fig:cc-pr-varying-threads}
\end{figure}

\stitle{Exp-2: Varying Threads.} We compare the performance of C++, Green-Marl and \flash while varying the threads as 1, 2, 4, 8, 16, 32. We show the results of CC and PR on TW in \reffig{cc-pr-varying-threads}. The other cases show similar trends, and are thus omitted for short of space. Both C++ and \flash achieve reasonable trends of scalability, while Green-Marl presents sharper decline than expected when increasing the threads from 1 to 2. To our best speculation, this may be due to the optimizations that Green-Marl has conducted on parallelism.  

\begin{table}[]
    \centering
    \small
    \caption{The comparisons of optimized CC (32 threads).}
    \label{tab:optimized_cc_comparison}
    \begin{tabular}{|c|c|c|c|}
       \hline
       The variants  & CC & CC-pull & CC-opt \\
       \hline
       Performance Ratio (TW)  & 7.44 & 1.50 & 1.44 \\
       \hline
       Performance Ratio (FS)  & 7.36 & 1.72 & 1.81 \\
       \hline
    \end{tabular}
\end{table}

\begin{figure}[htb]
   \centering
    \begin{subfigure}[b]{0.76\linewidth}
    \centering
        \includegraphics[height = 0.2in]{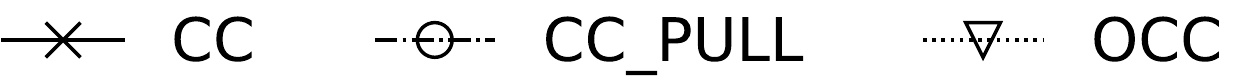}
    \end{subfigure}%
    \\
    \begin{subfigure}[b]{0.47\linewidth}
        \includegraphics[height=2.1in]{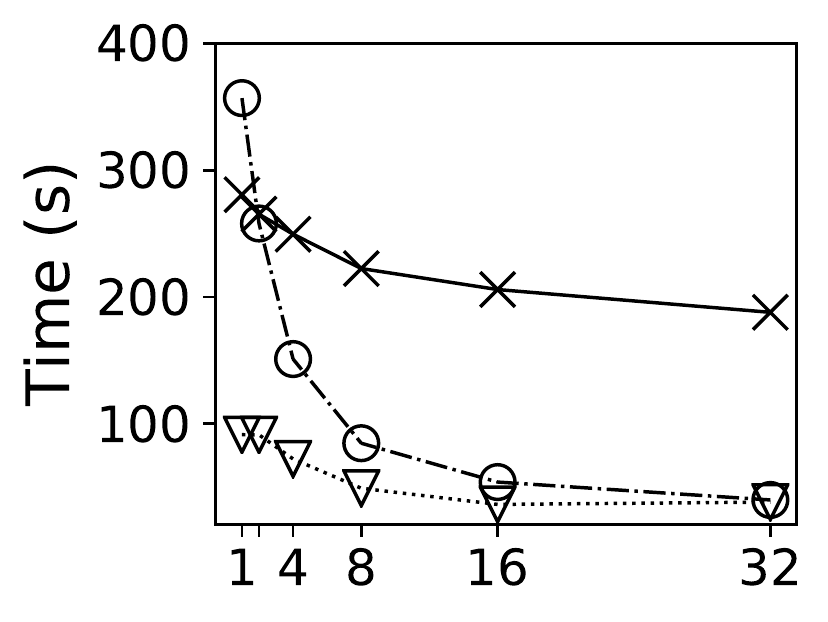}
        \caption{TW}
        \label{fig:exp3-twitter}
    \end{subfigure}%
    ~
    \begin{subfigure}[b]{0.45\linewidth}
        \includegraphics[height=2.1in]{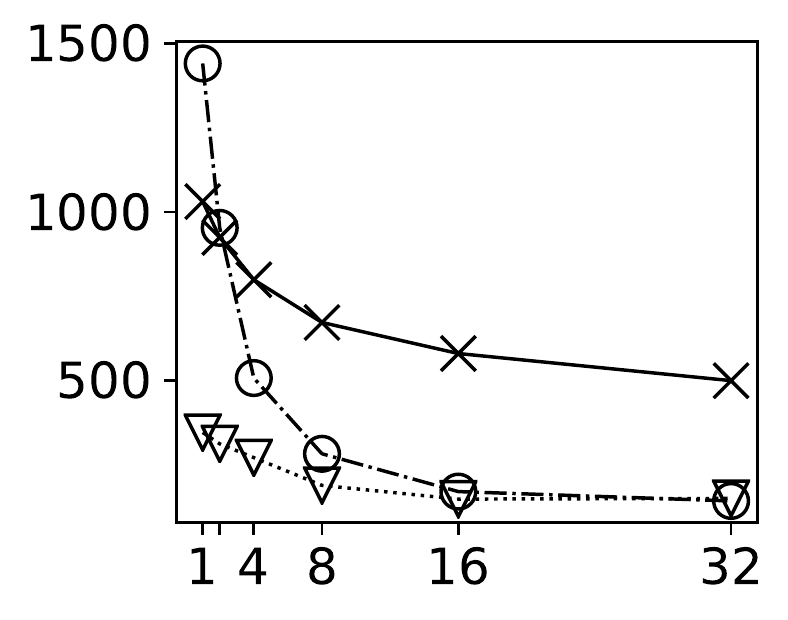}
        \caption{FS}
        \label{fig:exp3-friendster}
    \end{subfigure}%
    \caption{Performance comparisons of CC, CC-pull and CC-opt using \flash while varying threads.}
    \label{fig:cc-opt-varying-threads}
\end{figure}

\stitle{Exp-3: Optimizing Connected Component.} In response to the deficiency of basic CC implementation of \flash in Exp-1, we further compare the performance of the basic implementation of CC, and the two optimized versions CC-pull and CC-opt using \flash. We show the results in \reftable{optimized_cc_comparison}. Without any further optimization in code generation (system-level tuning), CC-pull and CC-opt have now achieved comparable performance to C++'s basic CC.

We are aware that it is not fair to compare the advanced algorithms of \flash with the basic codes of C++. Here we simply want to convey the point that \flash's strong programmability allows flexible programming of the algorithm to improve the runtime without system-level tuning. 

We further study the scalability of these CC implementations and show the results in \reffig{cc-opt-varying-threads}. Interestingly, CC-pull performs even worse than CC while using 1 and 2 threads, while it quickly outperforms CC and eventually approaches CC-opt when more threads are configured. This is because that the \Pull operation is called on all vertices of the graph (line~6 in \reflist{cc-pull}), which may benefit more from parallelism. This inspires that scalability may also be an important perspective in trading-off \Pull and \Push in \flash programming.

\begin{table}[]
    \centering
    \small
    \caption{The comparisons of advanced algorithms of CD, BC and TL (32 threads).}
    \label{tab:advanced_algorithms_comparison}
    \begin{tabular}{|c|c|c|c|}
       \hline
       Queries  & CD & BC & TL \\
       \hline
       Performance Ratio (TW)  & 1.94 & 4.06 & 1.15 \\
       \hline
       Performance Ratio (FS)  & 3.02 & 4.40 & 1.11 \\
       \hline
    \end{tabular}
\end{table}

\stitle{Exp-4: State-of-the-art Algorithms.} Exp-3 has shown that \flash can be flexibly programmed to improve the performance of given query (CC). In this experiment, we further compare C++ and \flash implementations of the three state-of-the-art algorithms for the queries CD, BC and TL. The results are shown in \reftable{advanced_algorithms_comparison}. Observe that \flash has achieved comparable performance while compared to C++ (codes given by the authors) in these cases. The most impressive case is TL, where \flash codes are just slightly slower than native C++. We are aware that the results of these several algorithms alone are far from giving a full picture of \flash. Our purpose is to show that \flash can already program a few complex algorithms that achieve satisfactory runtime in its current version, which reveals huge potentials for its future development.

\comment{
\subsecte basic algoion{Navigational Queries}
\label{sec:exp_navigational_queries}
We use the LDBC social network benchmarking (SNB) \cite{ldbc} in this experiment to compare Gremlin with \flash. SNB provides a data generator that generates a synthetic social network of required statistics, and a document that describes the benchmarking tasks. This code repository \cite{ldbc-gremlin} contains the Gremlin codes for the tasks. We pick up 8 complex tasks (queries) for this experiment, for which the Gremlin codes do not use the ``\code{.matching}'' operator. For short of space, we do not present these queries in this paper, while user can refer to this document \cite{ldbc-doc} for details, as we do not change the index of these tasks (e.g. $Q_2$ represents complex task 2).

\stitle{Datasets.} The datasets are generated using the "Facebook" mode with a duration of 3 years. The dataset's name, denoted as DG$x$, represents a scale factor of $x$. We generate DG10 and DG60 for the experiment, which both contain 11 vertex labels and 15 edge labels. Their respective statistics are in the following:
\begin{itemize}
    \item DG10 contains 30 million vertices and 176 million edges. The average degree is 12 and the maximum degree is 4,282,812. 
    \item DG60 contains 187 million vertices and 1246 million edges. The average degree is 13 and the maximum degree is 26,639,563. 
\end{itemize}
}

\section{Related Work}
\label{sec:related}
\stitle{Pattern Matching and Navigational Queries.} 
The theoretical foundation of pattern matching queries is Conjunction Regular Path Queries (CRPQs), which has inspired the pioneered languages of \textbf{G} \cite{Cruz1987} and GraphLog \cite{Consens1989}. Many efforts have been taken to apply CRPQs in semi-structured data, including Lorel \cite{Abiteboul1997}, StruQL \cite{Fernandez2000}, UnQL \cite{Buneman2000} on web linked data like HTML and W3C's SPARQL \cite{SparkQL} on RDF data. \cite{Wood2012} gives a detailed survey of these languages and the extensions. 
In the industry, Neo4j introduces Cypher \cite{Francis2018} that provides visual and logical way to match patterns of nodes and relationships in the graph; PSGL \cite{vanRest2016} from Oracle and G-Core \cite{Angles2018} from LDBC organization are developed with syntax similar to Cypher but incorporating path as first-class citizen in the language. Navigational queries are categorized as the second major graph queries \cite{Angles2017}, whose theoretical foundation is Regular Path Queries (RPQs). 
Gremlin is the very language that is designed to be navigational-query-oriented. These graph queries have arrived at a very mature stage, and is thus not the main focus of this work.   


\stitle{Analytical Queries.} The third category of graph queries is analytical queries, which are of great significance because of the usefulness of mining/learning knowledge from the linked data \cite{Scarselli2009, Washio2003}. 
There are many attempts in proposing DSLs for analytical queries. In the direction of high-level programming language, Green-Marl \cite{Hong2012} has been discussed in this paper. Zheng et al. proposed GraphIt \cite{Zhang2018} to further optimize the generated codes by considering the features of both graph queries and input data. However, in terms of expressiveness and programmability, GraphIt is not better than Green-Marl. In the direction of declarative query language, we have covered GSQL \cite{Deutsch2019} in the paper. SociaLite \cite{Lam2013} was proposed to extend Datalog for graph analysis that benefits from Datalog as a declarative language. However, it bases the iterative graph processing on recursion, which can be counter-intuitive for non-expert users. Jindal et al. \cite{Jindal2014} and Zhao et al. \cite{Zhao2017} proposed to process graph queries in RDBMS using an enhanced version of SQL, arguing that RDBMS can offer to manage and query the graph data in one single engine. However, people still invent new wheels for graph processing because of the inefficiencies (both expressiveness and performance) of graph queries in RDBMS \cite{Angles2008}. 

\stitle{Graph Computation Models and Primitives.} Graph computation models and primitives are more often developed for large graph processing in the parallel/distributed contexts. Pregel \cite{Malewicz2010} proposed the vertex-centric model that abstracts graph computations as vertex functions to the users, which almost becomes a standard of large graph processing. Gonzalez et al. introduced the Gather-Apply-Scatter (GAS) model along with PowerGraph engine \cite{Gonzalez2012} to better fit into the vertex-cut partition strategy. The graph/block-centric model was proposed \cite{Tian2013, Yan2014} to remove unnecessary message passing among vertices in the same partition. Salihoglu et al. \cite{Salihoglu2014} summarized some high-level primitives for large-graph processing. \cite{Heidari2018} surveyed the models/frameworks for distributed graph computation in the literature. 

\section{Conclusions}
\label{sec:concl}
We introduce the \flash DSL for graph analytical queries in this paper. We define the primitive operators, the control flow, and finally the \flash machine, which is shown to be Turing complete. We manage to simulate the widely-used GAS model using \flash for analytical queries, which immediately means that we can program \flash to express a lot of graph queries. We exhibit \flash's expressiveness by showing the succinctness of the \flash's implementations of representative graph queries. In addition, we demonstrate \flash's strong programmability by validating the implementation of a complex connected components algorithm. Our experiments reveal huge potentials of \flash while compared to Green-Marl and native C++. 

\comment{
\stitle{Future Work.} This paper has focused on discussing the semantics of \flash in processing graph analytical queries. A deeper investigation reveals that \flash is also suitable for navigational queries, as we are now able to cover many Gremlin (a representative language for navigational queries) traversals in its recipe \cite{gremlin-recipes}. Thus, an important future work is to further explore \flash's semantics for the other graph queries. Our second future work is to define the programming syntax of \flash, including the complete functional programming syntax and control-flow syntax.

We have observed enormous potentials of \flash in the parallel context, so our third future work targets a well-optimized distributed implementation of \flash. Note that \flash follows a BFS design with vertex-centric programming, which can be easily incorporated into the existing vertex-centric systems such as PowerGraph \cite{Gonzalez2012} and Pregel \cite{Malewicz2010} if one is after a basic distributed version. A solid implementation, however, needs to further address multiple issues, and we discuss three of the most important ones in the following. 

Firstly, the granularity of scheduling. It is performance-critical for a distributed runtime to schedule the program on a proper granularity \cite{Peter2013}. Different from existing vertex-centric systems that can easily schedule the program on one single vertex program, it is hard to properly schedule a \flash program, as it may contain multiple programs logically, while they are coupled with each other physically (e.g. using the same portion of graph data). Secondly, graph partitioning. \flash's design obviously favors the edge-cut partitioning, where each vertex has complete access to its neighbors in its partition. However, it can be problematic to support the vertex-cut partitioning, where certain vertex's neighbors can reside in multiple partitions. \cite{Gonzalez2012} adopted this partitioning strategy for more balanced load while handling real-life power-law graphs, and the authors introduced the concepts of master and ghost vertices to manage the multiple copies of a vertex. However, this brings in extra synchronization issue that is hard to resolve in the distributed setting. Finally, the distributed vertex indexing. Unlike conventional vertex-centric systems that only communicate with adjacent vertices, \flash allows each vertex to exchange data with any vertex via the implicit edges. Thereafter, it is necessary to index the location of each vertex in every single machine, which can be costly when the graph is big. More importantly, it can be challenging to manage such index when the graph is updated from time to time.
}
\bibliographystyle{abbrv}
\bibliography{flash_lang}

\end{document}